\documentclass{article}
\PassOptionsToPackage{table}{xcolor}
\usepackage{microtype}
\usepackage{graphicx}
\usepackage{subfigure}
\usepackage{multirow}
\usepackage{caption}
\usepackage{bm}
\usepackage{booktabs} %

\usepackage{hyperref}

\PassOptionsToPackage{numbers, compress}{natbib}
\usepackage[main, preprint]{neurips_2026}

\usepackage{amsmath}
\usepackage{amssymb}
\usepackage{mathtools}
\usepackage{amsthm}

\usepackage[capitalize,noabbrev]{cleveref}

\theoremstyle{plain}
\newtheorem{theorem}{Theorem}[section]

\newtheorem{corollary}[theorem]{Corollary}
\theoremstyle{definition}

\theoremstyle{remark}

\usepackage[textsize=tiny]{todonotes}

\usepackage[utf8]{inputenc} %
\usepackage[T1]{fontenc}    %
\usepackage{hyperref}       %
\hypersetup{hidelinks}
\usepackage{url}            %
\usepackage{booktabs}       %
\usepackage{amsfonts}       %
\usepackage{nicefrac}       %
\usepackage{microtype}      %
\usepackage[table]{xcolor}
\title{Constrained Group Relative Policy Optimization}

\author{%
  Roger Girgis\thanks{Equal contribution. Primary contact: \texttt{roger.girgis@gmail.com}} \\
  Mila -- Quebec AI Institute\\
  \'{E}cole Polytechnique de Montr\'{e}al\\
  \And
  Rodrigue de Schaetzen\footnotemark[1] \\
  Mila -- Quebec AI Institute\\
  Universit\'{e} de Montr\'{e}al\\
  \And
  Luke Rowe \\
  Mila -- Quebec AI Institute\\
  Universit\'{e} de Montr\'{e}al\\
  \And
  Azal\'{e}e Robitaille \\
  Mila -- Quebec AI Institute\\
  Universit\'{e} de Montr\'{e}al\\
  \And
  Christopher Pal \\
  Mila -- Quebec AI Institute\\
  \'{E}cole Polytechnique de Montr\'{e}al\\
  Universit\'{e} de Montr\'{e}al, CIFAR AI Chair\\
  \And
  Liam Paull \\
  Mila -- Quebec AI Institute\\
  Universit\'{e} de Montr\'{e}al, CIFAR AI Chair\\
}

\begin{document}

\maketitle

\begin{abstract}
Group Relative Policy Optimization (GRPO) remains the dominant critic-free approach for fine-tuning LLMs and VLMs, but its compatibility with constrained policy optimization (e.g. for safety-critical domains) has not been carefully examined. In this work, we introduce Constrained GRPO, a Lagrangian-based extension of GRPO for constrained policy optimization. We show that the standard practice of scalarizing rewards before normalization introduces a critical Lagrangian-specific failure mode:  GRPO’s within-group normalization makes constrained optimization highly sensitive to how multi-component learning signals are aggregated.  We show that scalarizing rewards before normalization introduces shared-denominator coupling, so that changing one multiplier alters not only the emphasis on its corresponding constraint, but also the relative weighting of the reward and other constraints.  We address this with a simple but crucial modification: scalarizing standardized advantages rather than rewards. This yields a better-conditioned update by addressing the coupling induced by reward scalarization, resulting in better-behaved multiplier dynamics and more stable constraint enforcement in practice. Empirically, across a controlled gridworld, a real-world autonomous driving benchmark, and a mathematical reasoning task, Constrained GRPO consistently achieves better adherence to specified constraints while maintaining or improving task performance.
\end{abstract}

\section{Introduction}
\label{sec:introduction}

Recent progress in reinforcement learning (RL) has shown that large multimodal models can be optimized with outcome-based signals. 
This has enabled the finetuning of models on mathematical reasoning tasks \cite{cobbe2021training,Wei2022ChainOT,Lewkowycz2022SolvingQR}, embodied decision making~\cite{zhang2025safevla}, among other domains where success can be evaluated with programmatic or learned rewards.
Group Relative Policy Optimization (GRPO)~\cite{Shao2024DeepSeekMathPT} removes the reliance on a critic for advantage estimation by a within-group normalization of the rewards obtained over multiple sampled outputs for the same query.
The critic-free structure of GRPO is appealing when using a value function during training is challenging / computationally expensive.

Existing GRPO-based methods have largely approached multi-objective optimization through manually weighting the importance of different reward terms~\cite{Li2025OptimizingSA, ichihara2025mo, liu2026gdpo}.
In contrast, the constrained policy optimization literature offers a principled framework, namely Constrained Markov Decision Processes (CMDPs), for enforcing behavioural preferences and constraints by explicitly defining thresholds on behaviours rather than employing fixed weights.
However, GRPO has not yet been integrated into a constrained optimization framework.
In this work, we propose \emph{Constrained GRPO}, a constrained policy optimization framework employing GRPO as the policy optimization algorithm.
Within our framework, we specify desirable behaviours with indicator cost functions~\cite{Roy2021DirectBS} and enforce target behaviour rates (in percentage) using a Lagrangian relaxation with learned multipliers.

A central challenge arises from the interaction between Lagrangian methods and GRPO's advantage estimation through within-group standardization.
A simple approach is to first \emph{scalarize rewards}, i.e., combine reward and cost terms into a single scalar value using the current multipliers, and then apply the within-group standardization.
However, we show that this seemingly innocuous choice introduces an additional source of coupling in the constrained update: 
because the scalarized signal is normalized by a shared, multiplier-dependent denominator, changing one multiplier can affect not only its corresponding objective term, but also the relative contribution of of other components to the policy update.
We provide a formal analysis of the gradient estimator sensitivity to changes in the multipliers, as well as an intuitive framing of how multiplier distortion occurs (e.g. Figure~\ref{fig:intro_fig}).
Motivated by this analysis, Constrained GRPO employs \emph{scalarizing advantages}, i.e., standardize each component within group first, and then combine the standardized components using the Lagrangian multipliers.
This design yields better-conditioning and well-behaved multiplier dynamics by removing the shared-denominator coupling that is induced by reward scalarization.
Importantly, this does not imply exact preservation of the original unnormalized Lagrangian objective; rather, it produces a more stable learning signal in the normalized component basis, which we validate empirically.

Prior and concurrent work has also found advantage scalarization beneficial in the realm of multi-objective GRPO~\cite{Li2025OptimizingSA, ichihara2025mo}; our contribution places this observation in the context of constrained optimization, showing why it is essential for effective Lagrangian constraint enforcement under GRPO.
Our main contributions are:
(1) We introduce Constrained GRPO, a constrained policy optimization framework built on GRPO, using indicator cost functions, Lagrangian relaxation and scalarizing advantages to effectively enforce behaviour rates without manually tuning reward weights.
(2) We analyze the common approach of scalarizing rewards before normalizing, and formally characterize the local response of the primal update to changes in the dual variables in Theorem~\ref{thm:scalarized_reward_logit_sensitivity}.
(3) We study the alternative approach of scalarizing advantages and empirically demonstrate improved stability and faithfulness of the behaviour specification in a gridworld environment, a realistic simulation-based autonomous driving setting, and a mathematical reasoning task.

\begin{figure*}[t]
\centering
\includegraphics[width=\textwidth]{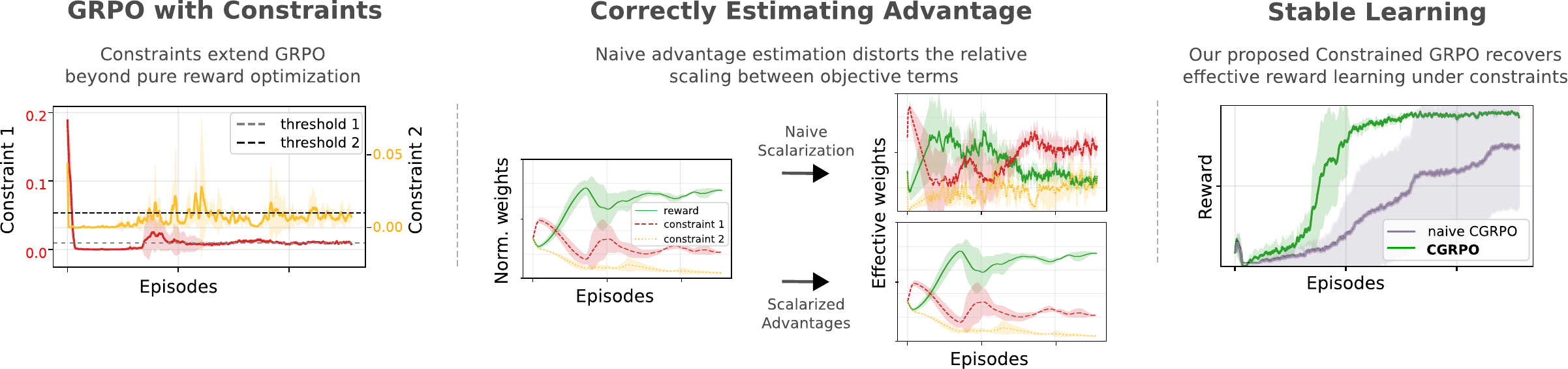}
\captionsetup{font=footnotesize}
\caption{Overview of our proposed approach. We extend GRPO to constrained policy optimization by enforcing user-specified behavioral constraints with thresholds (left). Naively scalarizing reward distorts reward–constraint trade-offs, producing inconsistent effective weights (middle; e.g., red/green swap). Scalarized advantages maintains the relative multiplier weighting between advantage components, yielding more stable learning and effective constraint enforcement (right).}
\label{fig:intro_fig}
\vspace{-5mm}
\end{figure*}

\section{Related Work}
\label{sec:related_work}

\textbf{Reward Specification.}
A central motivation for our work is the difficulty of specifying complex behaviours through a single reward function.  
This challenge has inspired several alternative approaches in the literature.
Imitation Learning~\cite{zheng2021imitation} bypasses reward engineering entirely by using expert demonstrations to define the task. 
Other methods introduce human feedback during training, either to guide the agent toward desired behaviours~\cite{christiano2017deep} or to prevent catastrophic errors during exploration~\cite{saunders2017trial}. 
Our approach shares a similar motivation—capturing designer intent more directly—but does so by specifying constraints via indicator functions and thresholds, avoiding the need for demonstrations or continuous human intervention.
Another promising direction is the use of natural language to specify tasks~\cite{goyal2019using,macglashan2015grounding}. 
While this can provide a more intuitive interface for human designers, language is inherently ambiguous and can be exploited by agents (reward hacking). 
Moreover, these methods typically require training an additional language-to-reward model, adding further complexity.
Another interesting approach is Inverse Reward Design~\cite{hadfield2017inverse,mindermann2018active,ratner2018simplifying}, which treats the provided reward function as a single observation of the designer's latent intent and learns a probabilistic model over possible true reward functions. 
This is particularly useful in settings with changing environments, but our work focuses on fixed-distribution environments where explicit behaviour specification is more appropriate.

\textbf{Multi-Objective Optimization with Large Models.}
In the context of large multimodal (vision-)~language models, there is extensive work on safety and multi-objective optimization via reinforcement learning~\cite{Moskovitz2023ConfrontingRM,Zhang2025SafeVLATS,Dai2023SafeRS,Ji2025SafeRS, Bai2022ConstitutionalAH}. 
For example, Constitutional AI~\cite{Bai2022ConstitutionalAH} can be viewed as balancing multiple objectives—most notably helpfulness and harmlessness—yet in practice these objectives are typically reduced to a single scalar reward and optimized with standard RLHF ~\cite{ouyang2022traininglanguagemodelsfollow} pipelines, for instance PPO-style updates~\cite{schulman2017proximal}. 
\citet{Moskovitz2023ConfrontingRM} frame composite reward models in RLHF as a constrained optimization problem, enforcing thresholds on individual reward components via a CMDP formulation and using dynamic weight adjustment (i.e., Lagrange multipliers) to mitigate over-optimization. 
Similarly, ~\citet{Dai2023SafeRS} formulate alignment as maximizing reward subject to cost constraints in RLHF, explicitly trading off helpfulness against safety-related costs. 
Finally, ~\citet{Wang2024InterpretablePV} study multi-objective reward models in RLHF, emphasizing interpretability as a means to help designers verify adherence to human preferences.

While several recent methods target safety alignment for RL-finetuned (vision-)language models, they are largely developed outside the GRPO setting. 
More recent work~\citep{Li2025OptimizingSA} train a multi-head reward model that predicts $K{=}4$ attributes (politeness, meaningfulness, actionability, safety) and then form a single scalar reward via a fixed weighted sum (using $w_k{=}1$ for all components), before optimizing the resulting objective with GRPO. \citet{ichihara2025mo} propose Multi-Objective GRPO (MO-GRPO) to reduce reward-hacking behavior under multiple objectives, and advocate scalarized advantages to equalize the influence of components with different variances. 
Concurrent to our work, GDPO~\cite{liu2026gdpo} also supports advantage scalarization and emphasizes the difficulty of selecting reward weights that reflect desired priorities, proposing conditional objectives that activate certain terms only once prerequisite scores exceed a minimum threshold.
In contrast, we study advantage scalarization through the lens of \emph{constrained} policy optimization. 
Our goal is to ensure that increases in the Lagrangian multipliers in Constrained GRPO translate into stronger enforcement of the specified behaviors. 
As shown experimentally, scalarizing advantages provides this property, whereas scalarizing rewards can implicitly reweight objectives and undermine effective Lagrangian optimization.

\section{Background}
\label{sec:background}

\textbf{Lagrangian methods.}
Constrained Markov Decision Processes (CMDPs) are commonly addressed with Lagrangian methods~\cite{achiam2017constrained,ray2019benchmarking,stooke2020responsive,zhang2020first,zhang2025safevla}, which cast the problem as a saddle-point optimization~\cite{uzawa,polyak,korpelevich1976}. 
These methods introduce $K$ nonnegative multipliers $\bm{\lambda}=\{\lambda_k\}_{k=1}^K$, where each $\lambda_k\ge 0$ controls the penalty for the $k$-th constraint $C_k$.
Letting $J_R(\pi)$ and $J_{C_k}(\pi)$ denote the expected returns of the reward $R$ and the constraint $C_k$ under policy $\pi$, with constraint threshold $d_k$, the resulting min--max problem is
\stepcounter{equation}
\begin{align}
\max_{\pi}\ \min_{\bm{\lambda}\ge 0}\ \mathcal{L}(\pi,\bm{\lambda}),
\tag{\theequation}\label{eq:cmdp-lagrangian}
\quad\text{where} \quad
\mathcal{L}(\pi,\bm{\lambda})
&= J_R(\pi)-\sum_{k=1}^K \lambda_k\big(J_{C_k}(\pi)-d_k\big). \notag
\end{align}
A standard approach applies gradient ascent in $\pi$ and gradient descent in $\bm{\lambda}$~\cite{gda}, which often yields good feasible solutions on CMDP benchmarks~\cite{achiam2017constrained,ray2019benchmarking,stooke2020responsive,zhang2020first}. 

Prior work~\cite{Roy2021DirectBS} employed multiplier normalization:
\begin{equation}
\label{eq:lambda_softmax_full}
\lambda_k \;=\; \frac{\exp(z_k)}{\exp(z_R) + \sum_{k'=1}^{K}\exp(z_{k'})},\;\; k=1,\dots,K.
\end{equation}
which includes a dummy variable $z_R$ for the main reward function such that $\lambda_R := 1-\sum_{k=1}^{K}\lambda_k$ and $z_k$'s are the \emph{unnormalized} multipliers (logits).
In this work, we adopt this mechanism since it ensures that one constraint can dominate the policy updates if necessary while maintaining stability~\cite{Roy2021DirectBS}.
The policy update (i.e., primal update) now corresponds to optimizing a linear combination of the reward and constraint signals,
\begin{equation}
\label{eq:combined_rewards}
S^{\bm{\lambda}} = \lambda_R R - \sum_{k=1}^K \lambda_k C_{k},
\end{equation}
and can be carried out with any policy optimization method (e.g., PPO~\cite{schulman2017proximal}, SAC~\cite{haarnoja2018soft}). 
The multiplier update (i.e., dual update) follows
\begin{align}
\label{eq:lambda_update}
\nabla_{\lambda_k}\mathcal{L}(\pi,\bm{\lambda})=-(J_{C_k}(\pi)-d_k),
\end{align}
typically implemented with a learning rate and a projection onto $\lambda_k\ge 0$ using max-clipping. If constraint $k$ is violated ($J_{C_k}>d_k$), $\lambda_k$ increases, strengthening its penalty; if it is satisfied ($J_{C_k}<d_k$), $\lambda_k$ decreases, allowing optimization to focus on return and other active constraints.

\textbf{Group-Relative Policy Optimization (GRPO).}
Reinforcement learning has proven effective for finetuning large models, complementing supervised finetuning (SFT): 
while SFT supports instruction and format adherence, RL finetuning often improves generalization, particularly in out-of-distribution regimes~\cite{Chu2025SFTMR}. 
Early work used PPO with learned reward models for preference optimization~\cite{ouyang2022traininglanguagemodelsfollow}. 
Assuming access to a queryable scalar reward (e.g., BLEU/ROUGE, verifier-based math checking), \citet{Shao2024DeepSeekMathPT} proposed eliminating the value model by estimating advantages within a group of samples drawn from the same prompt. 
Given a start state $s_0$, GRPO samples $G$ candidate trajectories $\{ \bm{\tau}_i = (s_0, a_0^i, \dots, s_{T-1}^i, a_{T-1}^i, s_T^i)\}_{i=1}^{G}$, from the behavior policy $\bm{\tau}_i \sim \pi_{\theta_{\mathrm{old}}}(\cdot \mid s_0)$, and computes group-relative advantages by standardizing returns:
\begin{equation*}
A_{i}=\frac{R(\bm{\tau}_i)-\mathrm{mean}(\mathbf{r})}{\mathrm{std}(\mathbf{r})}, \quad \mathbf{r} = [R(\bm{\tau}_1),\dots,R(\bm{\tau}_G)].
\end{equation*}
Using $A_{i}$, the PPO objective is modified into GRPO:
\begin{equation}
\begin{aligned}
\mathcal{J}_{\text{GRPO}}(\theta) = \mathbb{E}_{\{\bm{\tau}_i\}_{i=1}^G \sim
\pi_{\theta_{\text{old}}}}
     \left[ \frac{1}{G}\sum_{i=1}^G\sum_{t=0}^{T-1} \{ \min [b_1 \cdot A_{i}, b_2 \cdot A_{i}] - \beta \mathbb{D}_{\text{KL}} [\pi_{\theta} \mid\mid \pi_{\text{ref}}] \} \right ] , \notag\\
\text{where} \quad
b_1 = \frac{\pi_{\theta}(a_{t}^i\mid s_{t}^i)}{\pi_{\theta_{\text{old}}}(a_{t}^i\mid s_{t}^i)} , \quad
b_2 = \text{clip}\left ( \frac{\pi_{\theta}(a_{t}^i\mid s_{t}^i)}{\pi_{\theta_{\text{old}}}(a_{t}^i \mid s_{t}^i)}, 1 + \epsilon, 1 - \epsilon \right ),
\end{aligned}
\end{equation}

where the KL term (scaled by hyperparameter $\beta \geq 0$) directly regularizes the trained policy toward a fixed reference policy $\pi_{\mathrm{ref}}$, rather than being folded into the reward.
GRPO has shown strong performance in mathematical reasoning~\cite{Shao2024DeepSeekMathPT} and has been adopted in various robotics work~\cite{Li2025DriveR1BR,chen2025riftgrouprelativerlfinetuning,Pei2025AdvancingMT, wang2025alpamayo}.

\section{Constrained GRPO}
\label{sec:constrained_grpo}

We study constrained policy optimization with GRPO as the policy-gradient algorithm and Lagrangian relaxation as the constraint-handling mechanism. Our analysis examines how the choice of advantage estimation in GRPO under multiple reward and constraint signals affects the primal-dual dynamics, and identifies the conditions under which the dual updates reliably translate into primal behavior. We assume one reward and $K$ behavioral constraints; the extension to multiple rewards is straightforward.

In large-model finetuning, a common practice is to manually set the weights $\{\lambda_k\}$  and keep them fixed throughout training.
We model each desired behavior $k$ as a constraint and enforce it through a cost function $C_k$ with a target threshold $d_k$ and normalized multiplier $\lambda_k$ (Eq.~\ref{eq:lambda_softmax_full}).
A key design choice is how to express these thresholds in an intuitive and behavior-agnostic way.
Following prior work on direct behavior specification~\cite{Roy2021DirectBS}, we restrict our attention to indicator cost functions $C_k(s,a) = \mathbb{I}\big[\text{constraint $k$ violated in }(s,a)\big]$,
for which the threshold $\tilde d_k$ directly corresponds to a desired behavior rate.
In this work, we focus on trajectory-level constraints and therefore drop the explicit dependence on $(s,a)$.\footnote{The analysis in this work easily extends to state-action-level constraints.}
We estimate constraint violations from a batch of experience by the empirical average of the cost $C_k$.
The Lagrangian parameters are optimized using the update rule in Eq.~\eqref{eq:lambda_update}, with $d_k \rightarrow \tilde d_k$ and $J_{C_k}(\pi)$ computed as the average over $\{C_k\}_{i=1}^N$.

\textbf{GRPO Advantage Estimation.}
GRPO estimates advantages by within-group standardization: subtracting the group mean and dividing by the group standard deviation. 
There are two natural ways to compute the advantage.
The first scalarizes at the reward level by linearly combining rewards and costs with the Lagrange multipliers before estimating the advantage.
The second scalarizes at the advantage level by first computing component-wise advantages for each reward and cost and then linearly combining them with the Lagrange multipliers.
In what follows, we analyze the differences between these two approaches using a sensitivity analysis on the unclipped policy-gradient, revealing why scalarized advantages offers a more practical approach from the perspective of primal learning dynamics.
Note that this sensitivity analysis is taken with respect to the logits \(z_k\) (see Eq. \ref{eq:lambda_softmax_full}) rather than treating the multipliers \(\lambda_k\) as independent coordinates.
This is a consequence of the multiplier normalization that we employ and we provide the corresponding sensitivity analysis for the unnormalized case in Appendix \ref{thm:scalarized_reward_logit_sensitivity_apdx} (up to Eq.~\ref{eq:nosoftnorm_proof_screw}).

\paragraph{Scalarized rewards.}
The most direct approach first forms a scalarized signal $S^{\bm{\lambda}}$ as done in \eqref{eq:combined_rewards},
and then applies GRPO normalization:
\begin{equation*}
    A_i^{\mathrm{ScRew}} = \frac{S_i^{\bm{\lambda}}-\bar S^{\bm{\lambda}}}{\hat{\sigma}_{S^{\bm{\lambda}}}}, \quad \hat{\sigma}_{S^{\bm{\lambda}}} = \text{std}(S_1^{\bm{\lambda}}, \dots, S_G^{\bm{\lambda}}).
\end{equation*}
Subscript $i$ denotes the $i$-th sample in the group, and $\bar{\cdot}$ denotes the empirical mean over the group.
Let
\begin{equation*}
\psi_i := \sum_{t=0}^{T-1} \nabla_\theta \log \pi_\theta(a_t^i \mid s_t^i),
\end{equation*}
and define
\begin{equation*}
N_R := \frac{1}{G}\sum_{i=1}^G (R_i-\bar R)\psi_i,
\qquad
N_k := \frac{1}{G}\sum_{i=1}^G (C_{k,i}-\bar C_k)\psi_i.
\end{equation*}
Then the scalarized-reward constrained GRPO estimator can be written as
\begin{equation}
g^{\mathrm{ScRew}}(\bm{\lambda})
=
\frac{
\lambda_R N_R - \sum_{k=1}^K \lambda_k N_k
}{
\hat{\sigma}_{S^{\bm{\lambda}}}
}.
\end{equation}

We now compute the multiplier sensitivities of $g^{\mathrm{ScRew}}$:

\begin{theorem}[Multiplier sensitivity under scalarized rewards]
\label{thm:scalarized_reward_logit_sensitivity}
For a fixed sampled group, the sensitivity of the scalarized-reward GRPO estimator $g^{\mathrm{ScRew}}$ to the logit $z_k$ of constraint $k$ is
\begin{equation}
\frac{\partial g^{\mathrm{ScRew}}}{\partial z_k}
= \lambda_k \left(
\frac{\partial g^{\mathrm{ScRew}}}{\partial \lambda_k}
- \sum_{j \in \{R,1,\dots,K\}} \lambda_j \frac{\partial g^{\mathrm{ScRew}}}{\partial \lambda_j}
\right),
\end{equation}
where, letting $D := \lambda_R N_R - \sum_{k=1}^K \lambda_{k} N_{k}$, the partial sensitivities are
\begin{align*}
\frac{\partial g^{\mathrm{ScRew}}}{\partial \lambda_R}
= \phantom{-}\frac{N_R}{\hat\sigma_{S^{\bm{\lambda}}}}
- \frac{D}{\hat\sigma_{S^{\bm{\lambda}}}^3}\, \widehat{\mathrm{Cov}}(S^{\bm{\lambda}}, R), \quad\qquad \frac{\partial g^{\mathrm{ScRew}}}{\partial \lambda_k}
= -\frac{N_k}{\hat\sigma_{S^{\bm{\lambda}}}}
+ \frac{D}{\hat\sigma_{S^{\bm{\lambda}}}^3}\, \widehat{\mathrm{Cov}}(S^{\bm{\lambda}}, C_k),
\end{align*}
and $\widehat{\mathrm{Cov}}(S^{\bm{\lambda}}, X) := \frac{1}{G}\sum_{i=1}^G (S_i^{\bm{\lambda}} - \bar S^{\bm{\lambda}})(X_i - \bar X)$ denotes the empirical group covariance.
\end{theorem}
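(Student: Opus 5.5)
The proof has two parts. First I treat $g^{\mathrm{ScRew}}$ as a function of the full multiplier vector $(\lambda_R,\lambda_1,\dots,\lambda_K)$, regarded as independent coordinates, and compute its partial derivatives. Then I compose with the softmax parametrization of Eq.~\eqref{eq:lambda_softmax_full} via the chain rule. Throughout, the sampled group is held fixed. So $R_i$, $C_{k,i}$ and $\psi_i$, and hence $N_R$ and $N_k$, are constants. The only $\bm{\lambda}$-dependence sits in the numerator $D=\lambda_R N_R-\sum_k\lambda_k N_k$ and in the denominator $\hat\sigma_{S^{\bm{\lambda}}}$. I use the identity $\frac1G\sum_i (S_i^{\bm{\lambda}}-\bar S^{\bm{\lambda}})\psi_i = D$, which follows from linearity of centering, so that $g^{\mathrm{ScRew}} = D/\hat\sigma_{S^{\bm{\lambda}}}$ as stated.

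\textbf{Step 1: partial derivatives.} By the quotient rule, $\partial_{\lambda_j} g = (\partial_{\lambda_j} D)/\hat\sigma - D\,(\partial_{\lambda_j}\hat\sigma)/\hat\sigma^2$. The numerator term is immediate: $\partial_{\lambda_R}D = N_R$ and $\partial_{\lambda_k}D=-N_k$.

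For the denominator, I write $\hat\sigma^2 = \frac1G\sum_i (S_i^{\bm{\lambda}}-\bar S^{\bm{\lambda}})^2$, using the population normalization so that it matches the stated covariance. Since $S^{\bm{\lambda}}$ is linear in $\bm{\lambda}$,
\[
\partial_{\lambda_R}\hat\sigma^2 = 2\,\widehat{\mathrm{Cov}}(S^{\bm{\lambda}},R), \qquad \partial_{\lambda_k}\hat\sigma^2 = -2\,\widehat{\mathrm{Cov}}(S^{\bm{\lambda}},C_k).
\]
Hence $\partial_{\lambda_j}\hat\sigma = \partial_{\lambda_j}\hat\sigma^2/(2\hat\sigma)$. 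Substituting gives exactly the two displayed expressions with the $\hat\sigma^3$ factor, including the opposite signs for $R$ and $C_k$.

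\textbf{Step 2: chain rule through the softmax.} Set $z_R$ as the dummy logit, so that $\lambda_j = e^{z_j}/\sum_{j'\in\{R,1,\dots,K\}} e^{z_{j'}}$ for every $j$, with $\lambda_R = 1-\sum_k\lambda_k$. The softmax Jacobian is $\partial\lambda_j/\partial z_k = \lambda_j(\delta_{jk}-\lambda_k)$. Therefore
\[
\frac{\partial g}{\partial z_k}=\sum_j \frac{\partial g}{\partial\lambda_j}\,\lambda_j(\delta_{jk}-\lambda_k)=\lambda_k\Big(\frac{\partial g}{\partial\lambda_k}-\sum_j\lambda_j\frac{\partial g}{\partial\lambda_j}\Big),
\]
which is the claimed formula.

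The main point requiring care is this second step. One must justify treating $g$ as a function of all $K+1$ multipliers, with $\lambda_R$ a free coordinate, before composing with the softmax, rather than substituting $\lambda_R=1-\sum_k\lambda_k$ first. Both routes give the same total derivative with respect to $z_k$, since the softmax map automatically respects the constraint. I would state this explicitly so that the "partial sensitivities" are unambiguously the unconstrained partials.

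A minor caveat is that $\hat\sigma_{S^{\bm{\lambda}}}>0$ is needed for differentiability. I would assume it, since it holds whenever the group's scalarized signals are not all equal.
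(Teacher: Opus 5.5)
Your proposal is correct and follows the paper's proof essentially step for step. The paper likewise treats all $K+1$ multipliers as free coordinates, computes the partials via the quotient rule and $\partial\hat\sigma_{S^{\bm{\lambda}}} = \partial(\hat\sigma_{S^{\bm{\lambda}}}^2)/(2\hat\sigma_{S^{\bm{\lambda}}})$, and then composes with the softmax Jacobian $\partial\lambda_j/\partial z_k = \lambda_j(\delta_{jk}-\lambda_k)$. Your extra remarks (consistent $1/G$ normalization, $\hat\sigma_{S^{\bm{\lambda}}}>0$, and that the partials are the unconstrained ones) are sensible clarifications that the paper leaves implicit.
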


Proof in Appendix \ref{thm:scalarized_reward_logit_sensitivity_apdx}.
Observe that scalarized rewards induces \emph{two} forms of coupling. 
First, the softmax normalization means that increasing one logit decreases the effective weight on the other components by construction. 
Second, the shared denominator \(\hat\sigma_{S^{\bm{\lambda}}}\) introduces additional covariance-mediated coupling through \(\partial g^{\mathrm{ScRew}} /\partial \lambda_R \) and \(\partial g^{\mathrm{ScRew}} /\partial \lambda_k \). 
Consequently, changing one logit affects not only the corresponding constraint term, but also the relative contribution of the reward and all other constraints, 
while simultaneously modifying the normalization of the mixed signal; the resulting primal update is therefore altered both in magnitude and in direction.

\paragraph{Scalarized advantages.}
Our method instead normalizes each component separately and then combines the resulting standardized advantages:
\begin{equation*}
Z_{R,i} = \frac{R_i-\bar R}{\hat\sigma_R},
\qquad
Z_{C_k,i} = \frac{C_{k,i}-\bar C_k}{\hat\sigma_{C_k}}.
\end{equation*}
The scalarized advantage is
\begin{equation}
\label{eq:grpo_advantage_a1_multi}
A_i^{\mathrm{ScAdv}}
=
\lambda_R Z_{R,i}
-
\sum_{k=1}^K \lambda_k Z_{C_k,i}.
\end{equation}

The corresponding estimator is
\begin{equation}
g^{\mathrm{ScAdv}}(\bm{\lambda})
=
\lambda_R \frac{N_R}{\hat\sigma_R}
-
\sum_{k=1}^K
\lambda_k \frac{N_k}{\hat\sigma_{C_k}}.
\end{equation}

\begin{corollary}[Multiplier sensitivity under scalarized advantages]
\label{cor:scalarized_advantage_logit_sensitivity}
For a fixed sampled group, the sensitivity of the scalarized-advantage GRPO estimator to the logit \(z_k\) of constraint \(k\) is
\begin{equation}
\frac{\partial g^{\mathrm{ScAdv}}}{\partial z_k}
=
\lambda_k
\left(
-\frac{N_k}{\hat\sigma_{C_k}}
-
\lambda_R \frac{N_R}{\hat\sigma_R}
+
\sum_{k'=1}^K \lambda_{k'} \frac{N_{k'}}{\hat\sigma_{C_{k'}}}
\right).
\end{equation}
\end{corollary}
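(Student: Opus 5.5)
The plan is to reuse the chain-rule structure already established for Theorem~\ref{thm:scalarized_reward_logit_sensitivity}. The key observation is that $g^{\mathrm{ScAdv}}$ is \emph{linear} in the multipliers, because the denominators $\hat\sigma_R$ and $\hat\sigma_{C_k}$ depend only on the sampled group and not on $\bm{\lambda}$. The shared-denominator terms that appeared in the scalarized-reward case therefore vanish, and only the softmax Jacobian remains.

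The first step is to record the softmax Jacobian implied by Eq.~\eqref{eq:lambda_softmax_full}. Writing $\lambda_R = \exp(z_R)/(\exp(z_R)+\sum_{k'}\exp(z_{k'}))$, which agrees with $1-\sum_k\lambda_k$, we have for every $j\in\{R,1,\dots,K\}$
\[
\frac{\partial \lambda_j}{\partial z_k} = \lambda_j(\delta_{jk} - \lambda_k).
\]
Applying the chain rule to any function $g(\lambda_R,\lambda_1,\dots,\lambda_K)$ then gives
\[
\frac{\partial g}{\partial z_k} = \sum_j \frac{\partial g}{\partial \lambda_j}\lambda_j(\delta_{jk}-\lambda_k) = \lambda_k\Bigl(\frac{\partial g}{\partial\lambda_k} - \sum_j \lambda_j \frac{\partial g}{\partial \lambda_j}\Bigr).
\]
This is exactly the general identity in Theorem~\ref{thm:scalarized_reward_logit_sensitivity}. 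Its derivation never used the form of $g^{\mathrm{ScRew}}$, so I would invoke it directly for $g^{\mathrm{ScAdv}}$.

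The second step is to compute the partial sensitivities of $g^{\mathrm{ScAdv}}$ treating the $\lambda_j$ as independent coordinates. For a fixed group, $N_R$, $N_k$, $\hat\sigma_R$ and $\hat\sigma_{C_k}$ are constants, so
\[
\frac{\partial g^{\mathrm{ScAdv}}}{\partial\lambda_R} = \frac{N_R}{\hat\sigma_R}, \qquad \frac{\partial g^{\mathrm{ScAdv}}}{\partial\lambda_k} = -\frac{N_k}{\hat\sigma_{C_k}}.
\]

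The third step substitutes these into the chain-rule identity. The weighted sum is $\sum_j \lambda_j\,\partial g^{\mathrm{ScAdv}}/\partial\lambda_j = \lambda_R N_R/\hat\sigma_R - \sum_{k'}\lambda_{k'}N_{k'}/\hat\sigma_{C_{k'}}$, and subtracting it from $-N_k/\hat\sigma_{C_k}$ yields the stated expression.

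There is no real obstacle here. The only point requiring care is the justification that the standardization denominators are $\bm{\lambda}$-independent, which holds because each component is normalized before the multipliers are applied. One should also check that the Jacobian formula covers $j=R$ through the dummy logit $z_R$.
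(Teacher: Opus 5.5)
Your proposal is correct and follows essentially the same route as the paper. The paper likewise computes the $\bm{\lambda}$-independent partials $\partial g^{\mathrm{ScAdv}}/\partial\lambda_R = N_R/\hat\sigma_R$ and $\partial g^{\mathrm{ScAdv}}/\partial\lambda_k = -N_k/\hat\sigma_{C_k}$, then substitutes them into the generic softmax chain-rule identity derived in the proof of Theorem~\ref{thm:scalarized_reward_logit_sensitivity}, with the same resulting expression.
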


Unlike scalarized rewards, scalarized advantages does not introduce any additional shared-denominator coupling through the within-group normalization. With softmax-normalized multipliers, some competition between components is unavoidable, since increasing one logit reallocates mass away from the others. However, under scalarized advantages this is the \emph{only} source of coupling: the logit update is a competition over already-normalized component directions. In contrast, under scalarized rewards the same simplex competition is compounded by the dependence of \(\hat\sigma_S\) on the current reward-cost mixture, which adds covariance-mediated cross-coupling to the primal response.
For these reasons, we adopt scalarized advantages in constrained GRPO.

With scalarized advantages, the effective multipliers being applied to the advantage components is $\lambda_k$ by construction (i.e., Eq. \ref{eq:grpo_advantage_a1_multi}).
Whereas, Theorem~\ref{th:eff_mul_screw} shows that when scalarizing rewards, the multipliers are rescaled by $\tilde\lambda_k=\lambda_k\frac{\sigma_{C_k}}{\sigma_{S^{\bm{\lambda}}}}$ (likewise $\tilde\lambda_R$) such that
$A_i^{\mathrm{ScRew}} = \tilde\lambda_R Z_{R,i} -  \sum_{k=1}^K \tilde\lambda_k Z_{C_k,i}$.

\section{Experiments}

\subsection{Illustrative Environment}
\label{sec:toy_exp}

We begin with an illustrative example in a simple gridworld. 
An agent is randomly initialized in a $10\times10$ grid and must navigate to a randomly sampled goal location, receiving a sparse scalar reward upon reaching it. 
Each episode contains $20\%$ lava tiles; stepping onto lava incurs a penalty (i.e., a cost). 
In addition, the agent has a battery that depletes at every timestep and incurs a penalty whenever its battery level falls below $10\%$. 
The action space consists of five discrete actions: moving in the four cardinal directions or staying in place to recharge. 
This simple environment provides a controlled testbed for isolating the differences between \emph{scalarized rewards} and \emph{scalarized advantages} in constrained GRPO. Additional setup details and more experiments are provided in Appendix~\ref{sec:apdx:lava_env}.

We enforce constraint thresholds $\tilde d_{\text{lava}}=\tilde d_{\text{battery}}=0.01$ and train for $512$k episodes.
Figure~\ref{fig:gridworld_cgrpo} compares Constrained GRPO with scalarized advantages (ours) against a scalarized rewards baseline. We see that scalarized advantages yields more stable learning across seeds (5 seeds), as reflected by the narrower shaded region throughout training (column~1). 
Moreover, scalarized advantages leads to richer behavior-rate dynamics (column~2, bottom row): the Lagrangian multipliers (column~3, bottom row) adapt to allow the constraint rates to increase toward the threshold, effectively encouraging the policy to use its available cost budget. 
On the other hand, scalarized rewards rapidly drives both constraint rates close to zero early in training (column 2, top row), and keeps them near-zero thereafter, even as the learned multipliers (columb~3, top row) remain small.

This behavior can be understood through the implicit reweighting predicted by Theorem~\ref{th:eff_mul_screw}.
Observe that despite the small learned multipliers $\lambda_{\text{lava}}$ and $\lambda_{\text{battery}}$, the induced effective weights are larger and significantly noisier due to within-group variance and covariance effects (row~4, left column). This is especially problematic when the ordering of importance changes, as observed around 150K steps for the goal and lava weights (red/green swap). 
Overall, these results support \emph{scalarized advantages} as a more reliable advantage construction for Constrained GRPO, as it preserves the intended role of the Lagrangian multipliers in enforcing behavior specifications.

\begin{figure}[t]
    \centering
    \includegraphics[width=\columnwidth]{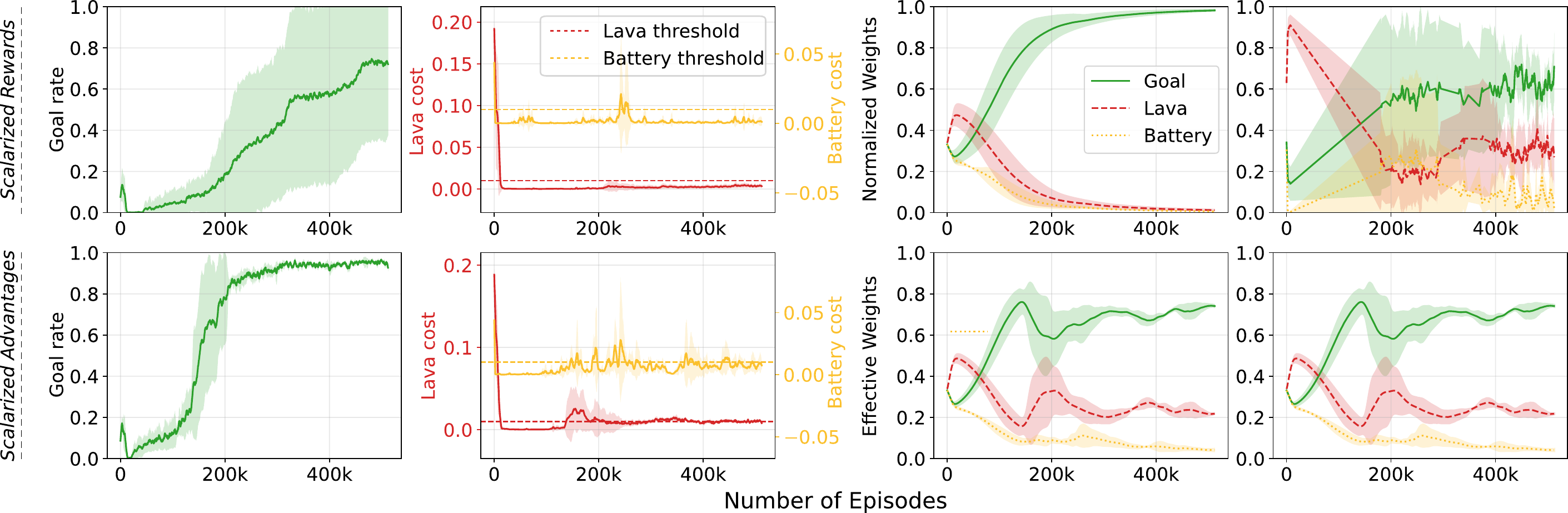}
    \captionsetup{font=footnotesize}
    \caption{Gridworld under Constrained GRPO, comparing \emph{scalarized rewards} (top) with \emph{scalarized advantages} (bottom). Scalarized advantages yields more stable training and uses the available cost budget, with behavior rates adapting toward the specified thresholds as the Lagrangian multipliers adjust. In contrast, scalarized rewards collapses constraint rates toward near-zero early in training with the effective weights remaining larger and noisier despite small learned multipliers.}
    \label{fig:gridworld_cgrpo}
    \vspace{-5mm}
\end{figure}

\subsection{NAVSIM Experiments}
\label{sec:DBS:navsim_exps}
\textbf{Benchmark Details.}

Our next experiments are on the NAVSIM-v2 driving benchmark~\cite{Cao2025CORL, Dauner2024NEURIPS}, which contains $103$k training scenarios (Navtrain) and $12$k testing scenarios (Navtest) curated from OpenScene~\cite{openscene2023,caesar2021nuplan}. 
NAVSIM-v2 evaluates planning using explicit behavioural safety constraints alongside task-level objectives such as progress and comfort, enabling targeted analysis of behaviour specification. We do our evaluations on the \emph{Navhard} split containing more challenging driving scenarios and planners are evaluated using a two-stage pseudo-simulation protocol:
Stage 1 evaluates the policy's ability to drive in unseen nominal driving scenarios while stage 2 evaluates the policy's ability to produce corrective behaviors.
More details are in Appendix~\ref{sec:apdx:navsim}.

The primary evaluation metric is Extended PDM Score (EPDMS $\in [0,1]$) which aggregates multiple subscores via a product of hard penalty terms and a weighted average of auxiliary terms (Eq. \ref{eq:apdx:epdms} in Appendix \ref{sec:apdx:navsim}).
The penalty terms, each taking values in $\{0,1\}$, include: (1) no at-fault collisions (NC), (2) drivable area compliance (DAC), (3) driving direction compliance (DDC), and (4) traffic light compliance (TLC). Since these penalties are combined multiplicatively, violating any one of them sets EPDMS$=0$ for that scene.
The weighted average terms consist of: (1) ego progress (EP) along the route, (2) time-to-collision (TTC), (3) history comfort (HC), (4) extended comfort (EC), and (5) lane keeping (LK).
We also report the $4$-second Average Displacement Error (ADE@4s) on Stage~1 scenes, computed with respect to the human trajectory.

Although the penalty terms are designed as hard constraints at evaluation time, directly optimizing a multiplicative hard-gated objective can lead to sparse learning signals and unstable optimization, as we show. 
We therefore treat penalty terms as soft behavioural constraints, targeting near-perfect satisfaction (0.99) while preserving informative gradients throughout training.

\textbf{Model Baselines.}
\hspace{.25cm}
We initialize all experiments with the Qwen2.5-VL 3B Instruct model~\cite{Bai2025Qwen25VLTR}, which we pre-train following the Poutine supervised finetuning (SFT) recipe~\cite{rowe2025poutine}. We then perform additional SFT on the NAVSIM training set~\cite{Cao2025CORL, Dauner2024NEURIPS} (see Appendix \ref{sec:apdx:navsim}), reserving $\approx11$k scenes as a held-out split for RL finetuning (RLFT). Our baselines are as follows:\footnote{Traffic light compliance (TLC) was almost always satisfied in preliminary experiments and is therefore omitted as an explicit objective, except when optimizing EPDMS.}

(1) \textit{GRPO with EPDMS}: optimize EPDMS as the reward, in the spirit of CaRL~\cite{jaeger2025carl} (which optimizes a similar objective with PPO). 
Because EPDMS is multiplicative, any NC/DAC/DDC/TLC violation collapses the reward to zero, acting as a hard constraint.

(2--5) \textit{Linear scalarization:} a weighted sum of constraints and the weighted-average terms (WAT): $R(\tau) = w_1\text{NC} + w_2\text{DAC} + w_3\text{DDC} + w_4\text{WAT}$.
This follows prior multi-objective GRPO setups~\cite{Li2025OptimizingSA,ichihara2025mo}.
We evaluate both scalarization choices: scalarize rewards (\textit{ScRew}) and scalarize advantages (\textit{ScAdv}) and consider two weighting schemes: (i) equal weights $w_i=0.25$ (\textit{GRPO, ScRew/ScAdv, Equal Weights}), 
and (ii) informed weights tuned from SFT behavior $w_1{=}0.1$, $w_2{=}0.8$, $w_3{=}0.05$, $w_4{=}0.05$ (\textit{GRPO, ScRew/ScAdv, Informed Weights}).

(6--7) \textit{Constrained GRPO:} treat WAT is as the main objective and enforce NC/DAC/DDC as constraints, setting all constraint satisfaction rate thresholds  to $0.99$~\footnote{Note that to maintain the upper-bound form of the constraint, we instead enforce a  $\tilde d_k=0.01$ threshold on the negation of the penalty terms NC/DAC/DDC.}. We again compare scalarized rewards and scalarized advantages (\textit{Constrained GRPO, ScRew} vs.\ \textit{Constrained GRPO, ScAdv}).

(8) We also compare against \textit{PPO and Constrained PPO}. The former uses EPDMS as the reward while the latter uses the standard Lagrangian variant of PPO \cite{ray2019benchmarking,stooke2020responsive} where the constraints are identical to Constrained GRPO described above.

\textbf{Results.}
Tables~\ref{tab:navsim_combined} reports results on NAVSIM-v2 Stage~1 and Stage~2, respectively.
For context, we also include the performance of the rule-based PDM planner~\cite{dauner2023parting}, whose trajectories are used to compute EP.
We now summarize the key findings:

\textbf{\textit{Behavior specification through constrained RL outperforms alternative strategies.}} Our results indicate that constrained RL provides a more effective mechanism for behavior specification than alternative reward-design approaches. In particular, the two constrained GRPO variants (bottom two rows of Table~\ref{tab:navsim_combined}) achieve strong overall performance, with constrained GRPO using scalarized advantages attaining the highest S1 EPDMS and second highest S2 EPDMS. On in-distribution Stage~1 driving scenarios, both constrained variants substantially improve upon the SFT checkpoint, whereas the unconstrained baselines do not consistently yield gains.
On the more challenging out-of-distribution Stage~2 split, all methods exhibit degraded performance relative to S1 scores, as expected. Nevertheless, our approach remains the most competitive among GRPO baselines and achieves comparable performance with PPO-Langagrian, showcasing the usefulness of our behavior specification framework for limiting bad driving behaviors.

\textit{\textbf{Scalarized advantages outperforms scalarizing rewards.}} 
Across all three linear formulations (equal weights, informed weights, and constrained), scalarizing advantages yields consistently higher EPDMS than scalarizing rewards, for both Stage~1 and Stage~2. This trend aligns with our analysis in Section~\ref{sec:constrained_grpo}: when objective components have mismatched scales, scalarizing rewards can implicitly bias the optimization toward high-variance terms, distorting the intended trade-offs. 
\begin{table*}[t]
\centering
\resizebox{\textwidth}{!}{
\begin{tabular}{l|c|c|cccccccc}
\toprule
Method & Constraints & Stage &
NC $\uparrow$ & DAC $\uparrow$ & DDC $\uparrow$ & TLC $\uparrow$ & EP $\uparrow$ & HC $\uparrow$ & EPDMS $\uparrow$ & ADE@4s $\downarrow$ \\
\midrule
\multirow{2}{*}{\textcolor{gray!70!black}{PDM-Closed (privileged) \cite{dauner2023parting}}} & \multirow{2}{*}{\textcolor{gray!70!black}{--}} & \textcolor{gray!70!black}{S1} & \textcolor{gray!70!black}{0.944} & \textcolor{gray!70!black}{0.988} & \textcolor{gray!70!black}{1.000} & \textcolor{gray!70!black}{0.995} & \textcolor{gray!70!black}{1.000} & \textcolor{gray!70!black}{0.877} & \textcolor{gray!70!black}{0.834} & \textcolor{gray!70!black}{--} \\
 & & \textcolor{gray!70!black}{S2} & \textcolor{gray!70!black}{0.881} & \textcolor{gray!70!black}{0.906} & \textcolor{gray!70!black}{0.963} & \textcolor{gray!70!black}{0.985} & \textcolor{gray!70!black}{1.000} & \textcolor{gray!70!black}{0.915} & \textcolor{gray!70!black}{0.687} & \textcolor{gray!70!black}{--} \\
\midrule\midrule

\multirow{2}{*}{SFT Checkpoint} & \multirow{2}{*}{--} & S1 & 0.953 & 0.789 & 0.978 & \underline{0.993} & 0.836 & \underline{0.976} & 0.651 & \textbf{2.539} \\
 & & S2 & 0.800 & 0.756 & 0.869 & 0.981 & 0.878 & 0.963 & 0.444 & -- \\
\midrule
\multirow{2}{*}{GRPO with EPDMS \cite{jaeger2025carl}} & \multirow{2}{*}{Hard} & S1 & 0.897 & 0.800 & 0.947 & 0.990 & 0.944 & \textbf{0.978} & 0.612 & 6.900 \\
 & & S2 & 0.749 & 0.759 & 0.829 & 0.969 & 0.979 & 0.945 & 0.381 & -- \\
\midrule
\multirow{2}{*}{GRPO, ScRew, Equal Weights \cite{Li2025OptimizingSA}} & \multirow{2}{*}{None} & S1 & 0.891 & 0.625 & 0.876 & 0.989 & \textbf{0.968} & \underline{0.976} & 0.445 & 9.050 \\
 & & S2 & 0.751 & 0.696 & 0.787 & 0.968 & \textbf{0.986} & 0.954 & 0.347 & -- \\
\midrule
\multirow{2}{*}{GRPO, ScAdv, Equal Weights \cite{ichihara2025mo}} & \multirow{2}{*}{None} & S1 & 0.908 & 0.661 & 0.900 & 0.989 & \underline{0.954} & 0.973 & 0.490 & 7.172 \\
 & & S2 & 0.767 & 0.705 & 0.813 & 0.971 & \underline{0.981} & 0.949 & 0.370 & -- \\
\midrule
\multirow{2}{*}{GRPO, ScRew, Informed Weights} & \multirow{2}{*}{None} & S1 & 0.887 & 0.643 & 0.887 & 0.987 & \textbf{0.967} & \textbf{0.978} & 0.463 & 8.509 \\
 & & S2 & 0.754 & 0.695 & 0.791 & 0.969 & \textbf{0.986} & 0.954 & 0.350 & -- \\
\midrule
\multirow{2}{*}{GRPO, ScAdv, Informed Weights} & \multirow{2}{*}{None} & S1 & 0.932 & 0.765 & 0.971 & \underline{0.993} & 0.890 & \underline{0.976} & 0.623 & 3.495 \\
 & & S2 & 0.778 & 0.731 & 0.831 & 0.972 & 0.944 & 0.962 & 0.385 & -- \\
\midrule
\multirow{2}{*}{PPO with EPDMS} & \multirow{2}{*}{Hard} & S1 & 0.944 & 0.731 & 0.945 & 0.991 & 0.862 & \underline{0.976} & 0.589 & 4.057 \\
 & & S2 & 0.794 & 0.727 & 0.855 & 0.969 & 0.908 & 0.963 & 0.415 & -- \\
\midrule
\multirow{2}{*}{PPO-Lagrangian \cite{ray2019benchmarking}} & \multirow{2}{*}{Soft} & S1 & \textbf{0.986} & \underline{0.891} & \textbf{0.994} & \textbf{0.998} & 0.721 & \textbf{0.978} & 0.737 & 4.558 \\
 & & S2 & \textbf{0.895} & \underline{0.812} & \textbf{0.912} & \textbf{0.990} & 0.767 & \textbf{0.980} & \textbf{0.536} & -- \\
\midrule\midrule
\multirow{2}{*}{Constrained GRPO, ScRew} & \multirow{2}{*}{Soft} & S1 & \underline{0.955} & 0.887 & \underline{0.987} & \underline{0.993} & 0.868 & \underline{0.976}  & \underline{0.743} & \underline{3.344} \\
 & & S2 & 0.810 & 0.784 & 0.866 & 0.976 & 0.939 & 0.952 & 0.436 & -- \\
\midrule
\multirow{2}{*}{\textbf{Constrained GRPO, ScAdv (ours)}} & \multirow{2}{*}{Soft} & S1 & \underline{0.957} & \textbf{0.928} & \underline{0.989} & \textbf{0.998} & 0.796 & \textbf{0.977} & \textbf{0.774} & 3.361 \\
 & & S2 & \underline{0.853} & \textbf{0.830} & \underline{0.897} & \underline{0.986} & 0.863 & \underline{0.967} & \underline{0.516} & -- \\
\bottomrule
\end{tabular}
}
\captionsetup{font=footnotesize}
\caption{\textbf{Results on NAVSIM-v2 Navhard.} We compare several baselines with Constrained GRPO in Stage 1 (S1, real scenes) and Stage 2 (S2, synthetic scenes). The Constraints column indicates whether NC/DAC/DDC/TLC are treated as hard constraints, soft constraints, or not enforced. \textit{ScRew}: Scalarized Rewards; \textit{ScAdv}: Scalarized Advantages.}
\label{tab:navsim_combined}
\vspace{-5mm}
\end{table*}

We further illustrate this effect in Figure~\ref{fig:train_constrained_multipliers_grpo} in Appendix~\ref{sec:apdx:navsim}. 
Under constrained GRPO with ScRew, the learned Lagrangian multipliers do not translate into effective constraint enforcement. This is most evident for DAC and EP, which is a proxy for the WAT objective: despite DAC being assigned nearly the maximum multiplier weight (and the main reward weight approaching zero), DAC does not reach the target satisfaction level, while EP continues to increase and only slowly decays later in training.
In contrast, constrained GRPO with ScAdv responds rapidly to multiplier updates. 
DAC improves within the first $\sim$1000 training steps, while EP decreases as its multiplier is reduced, demonstrating that the Lagrangian weights exert their intended influence on the learning dynamics. 
Finally, around $\sim$2500 steps, the NC multiplier increases sharply as DAC becomes satisfactory but collision compliance degrades, showing that the optimizer adaptively reallocates emphasis across constraints. 

\textit{\textbf{Hard constraints lead to suboptimal learning dynamics.}} Finally, observe that directly optimizing EPDMS as a reward (i.e., GRPO with EPDMS) underperforms the soft-constraint formulations, yielding lower overall EPDMS on both Stage~1 and Stage~2. Intuitively, EPDMS couples constraints through a multiplicative structure, collapsing the learning signal whenever any penalty term is violated. This removes the ability to trade off and prioritize objectives smoothly during training, and prevents variance-aware balancing of component signals. In contrast, our soft-constraint formulation retains informative gradients even when constraints are partially violated, enabling more stable optimization and ultimately improved behavior specification.

\subsection{GSM8K Experiments}
\label{sec:DBS:math_exps}

\textbf{Benchmark Details.} \hspace{.25cm}
We further evaluate our approach in a language reasoning setting using the GSM8K dataset, which consists of grade-school math word problems requiring multi-step reasoning.
For each completion, we consider four signals: correctness, format adherence, integer validity, and length (which favors shorter reasoning chains).

\textbf{Model Baselines.}
\hspace{.25cm}
We initialize all experiments from Qwen2.5-Instruct models at three scales: $1.5$B, $3$B, and $7$B parameters.
For the baselines, we compare to \textit{GDPO} and \textit{PPO-Lagrangian}.
GDPO~\cite{liu2026gdpo} is a prior advantage-scalarization baseline designed for multi-objective GRPO training.
PPO-Lagrangian uses a learned value function and enforces constraints through the standard primal-dual PPO formulation.
For our constrained formulation, we treat correctness, format adherence, and integer validity as constraints, and optimize the length objective as the primary reward. 
Our goal is not to claim that correctness is only required up to a threshold; rather, the thresholds serve as optimization targets that prevent the policy from trivially improving length by sacrificing correctness or output validity.

\begin{table*}[t]
\centering
\resizebox{\textwidth}{!}{
\begin{tabular}{l|c|cccc}
\toprule
Method & \# params & Length $\downarrow$ & Format $\uparrow$ & Integer $\uparrow$ & Correctness $\uparrow$ \\
\cmidrule(r){1-2}\cmidrule(l){3-6}
PPO-Lagrangian & 1.5B & 346.93 & 0.886 & 0.935 & 0.604 \\
GDPO~\cite{liu2026gdpo} & 1.5B & 296.89 & \textbf{0.953} & \textbf{0.981} & 0.629 \\
Constrained GRPO, ScRew & 1.5B & \textbf{246.82} & 0.934 & 0.941 & 0.640 \\
Constrained GRPO, ScAdv (ours) & 1.5B & 293.36 & \textbf{0.953} & 0.966 & \textbf{0.712} \\
\midrule
PPO-Lagrangian & 3B & 753.10 & 0.759 & 0.867 & 0.642 \\
GDPO~\cite{liu2026gdpo} & 3B & \textbf{330.11} & 0.911 & 0.977 & 0.770 \\
Constrained GRPO, ScRew & 3B & \underline{334.61} & 0.965 & 0.968 & 0.751 \\
Constrained GRPO, ScAdv (ours) & 3B & \underline{349.93} & \textbf{0.987} & \textbf{0.994} & \textbf{0.802} \\
\midrule
PPO-Lagrangian & 7B & 327.74 & 0.980 & \underline{0.995} & \textbf{0.891} \\
GDPO~\cite{liu2026gdpo} & 7B & \underline{127.49} & \textbf{0.990} & \underline{0.995} & 0.768 \\
Constrained GRPO, ScRew & 7B & \textbf{119.65} & \underline{0.988} & \underline{0.993} & 0.731 \\
Constrained GRPO, ScAdv (ours) & 7B & 256.62 & \underline{0.989} & \textbf{0.996} & \underline{0.887} \\
\bottomrule
\end{tabular}
}
\captionsetup{font=footnotesize}
\caption{\textbf{GSM8K results across model scales.} We compare \textit{PPO-Lagrangian}, \textit{GDPO~\cite{liu2026gdpo}}, and constrained GRPO with \textit{scalarized rewards} and \textit{scalarized advantages} with Qwen2.5-Instruct models at $1.5$B, $3$B, and $7$B parameters. 
Constrained GRPO consistently improves correctness over GDPO at all scales while maintaining strong format and integer validity, and achieves competitive or better performance at $1.5$B and $3$B without requiring a learned critic (i.e., PPO-Lagrangian). Bold indicates the best;  Underlined values indicate results that are close to the best. 
}
\label{tab:gsm8k_main}
\vspace{-5mm}
\end{table*}

\textbf{Results.}
\hspace{.25cm}
Tables~\ref{tab:gsm8k_main} reports our GSM8K results.
Across all three model scales, constrained GRPO achieves higher correctness than GDPO while maintaining strong performance on format adherence and integer validity.
The gains are most pronounced at larger scale: at $3$B, constrained GRPO improves correctness from $0.770$ to $0.802$, and at $7$B from $0.768$ to $0.887$.
These results suggest that explicitly enforcing correctness as a constraint prevents the model from over-optimizing easier auxiliary objectives such as reducing output length.
We observe that GDPO often produces substantially shorter generations, especially at the $7$B scale, where it reduces length to $127.49$ tokens on average but at the expense of correctness. 
Compared to PPO-Lagrangian, constrained GRPO achieves comparable or better performance on all four metrics at the $1.5$B and $3$B scales.
At $7$B, PPO-Lagrangian attains slightly higher correctness ($0.891$ vs.\ $0.887$), but constrained GRPO remains competitive and produces shorter outputs without requiring expensive critic training.

\section{Conclusion}
\label{sec:conclusion}

We introduced Constrained GRPO, a Lagrangian extension of GRPO that enforces user-specified constraints via indicator costs and learned multipliers. 
We showed that scalarizing rewards before normalization introduces shared-denominator coupling, so that changing one multiplier can affect not only its corresponding constraint term, but also the relative contribution of the other terms to the policy update. 
By instead scalarizing standardized advantages, Constrained GRPO yields a better-conditioned update and more stable multiplier dynamics in practice.

\section{Acknowledgements}
We thank Samsung, the IVADO and the Canada First
Research Excellence Fund (CFREF) / Apogee Funds,
the Canada CIFAR AI Chairs Program, Fonds de Recherche Nature et Technologies (FRQNT), and the NSERC
Discovery Grants program for their financial support. We thank Mila - Quebec AI Institute for compute resources. 
This research was supported by grants from NVIDIA and utilized NVIDIA 15K A100 GPU-hours on Saturn Cloud.

\bibliographystyle{unsrtnat}  %
\bibliography{neurips}     %

\appendix
\onecolumn

\section{Additional theorems and Proofs}
\label{sec:apdx:proof}

\begin{theorem}[Theorem~\ref{thm:scalarized_reward_logit_sensitivity} restated]
\label{thm:scalarized_reward_logit_sensitivity_apdx}
For a fixed sampled group, the sensitivity of the scalarized-reward GRPO estimator $g^{\mathrm{ScRew}}$ to the logit $z_k$ of constraint $k$ is
\begin{equation}
\frac{\partial g^{\mathrm{ScRew}}}{\partial z_k}
= \lambda_k \left(
\frac{\partial g^{\mathrm{ScRew}}}{\partial \lambda_k}
- \sum_{j \in \{R,1,\dots,K\}} \lambda_j \frac{\partial g^{\mathrm{ScRew}}}{\partial \lambda_j}
\right),
\end{equation}
where, letting $D := \lambda_R N_R - \sum_{k=1}^K \lambda_{k} N_{k}$, the partial sensitivities are
\begin{align*}
\frac{\partial g^{\mathrm{ScRew}}}{\partial \lambda_R}
= \frac{N_R}{\hat\sigma_{S^{\bm{\lambda}}}}
- \frac{D}{\hat\sigma_{S^{\bm{\lambda}}}^3}\, \widehat{\mathrm{Cov}}(S^{\bm{\lambda}}, R), \quad\qquad \frac{\partial g^{\mathrm{ScRew}}}{\partial \lambda_k}
= -\frac{N_k}{\hat\sigma_{S^{\bm{\lambda}}}}
+ \frac{D}{\hat\sigma_{S^{\bm{\lambda}}}^3}\, \widehat{\mathrm{Cov}}(S^{\bm{\lambda}}, C_k),
\end{align*}
and $\widehat{\mathrm{Cov}}(S^{\bm{\lambda}}, X) := \frac{1}{G}\sum_{i=1}^G (S_i^{\bm{\lambda}} - \bar S^{\bm{\lambda}})(X_i - \bar X)$ denotes the empirical group covariance.
\end{theorem}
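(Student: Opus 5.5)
The proof has two parts: a chain rule through the softmax, and a direct quotient-rule computation of the partial sensitivities.

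\textbf{Step 1: softmax Jacobian.} Treat $g^{\mathrm{ScRew}}$ as a function of the full weight vector $(\lambda_R,\lambda_1,\dots,\lambda_K)$, with each coordinate independent. Then compose with the parametrization in Eq.~\eqref{eq:lambda_softmax_full}. Write $\lambda_R=\exp(z_R)/Z$ and $\lambda_k=\exp(z_k)/Z$, where $Z=\exp(z_R)+\sum_{k'}\exp(z_{k'})$. This is an ordinary softmax over the index set $\{R,1,\dots,K\}$, so
\[
\frac{\partial\lambda_j}{\partial z_k}=\lambda_j(\delta_{jk}-\lambda_k),
\]
and this also covers $j=R$. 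The chain rule then gives
\[
\frac{\partial g}{\partial z_k}=\sum_j \frac{\partial g}{\partial\lambda_j}\,\lambda_j(\delta_{jk}-\lambda_k)=\lambda_k\Big(\frac{\partial g}{\partial\lambda_k}-\sum_j\lambda_j\frac{\partial g}{\partial\lambda_j}\Big),
\]
which is exactly the displayed identity.

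\textbf{Step 2: partial derivatives.} Write $g^{\mathrm{ScRew}}=D/\hat\sigma_{S^{\bm\lambda}}$. Here $D=\lambda_R N_R-\sum_k\lambda_k N_k$ is linear in the weights, and $N_R,N_k,R_i,C_{k,i}$ do not depend on $\bm\lambda$ because the group is fixed. So $\partial D/\partial\lambda_R=N_R$ and $\partial D/\partial\lambda_k=-N_k$.

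For the denominator, use $\hat\sigma_S^2=\frac1G\sum_i(S_i-\bar S)^2$. The centered signal is linear in the weights: $S_i-\bar S=\lambda_R(R_i-\bar R)-\sum_k\lambda_k(C_{k,i}-\bar C_k)$. Hence
\[
\partial\hat\sigma_S^2/\partial\lambda_R=2\,\widehat{\mathrm{Cov}}(S,R),\qquad \partial\hat\sigma_S^2/\partial\lambda_k=-2\,\widehat{\mathrm{Cov}}(S,C_k),
\]
and therefore $\partial\hat\sigma_S/\partial\lambda_j=\pm\widehat{\mathrm{Cov}}(S,\cdot)/\hat\sigma_S$.

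Applying the quotient rule,
\[
\frac{\partial g}{\partial\lambda_j}=\frac{\partial_j D}{\hat\sigma_S}-\frac{D\,\partial_j\hat\sigma_S}{\hat\sigma_S^2}.
\]
This gives $N_R/\hat\sigma_S-D\,\widehat{\mathrm{Cov}}(S,R)/\hat\sigma_S^3$ for $j=R$, and $-N_k/\hat\sigma_S+D\,\widehat{\mathrm{Cov}}(S,C_k)/\hat\sigma_S^3$ for $j=k$. These match the stated formulas.

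\textbf{Main obstacle.} There is no deep difficulty; the delicate points are bookkeeping.

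First, one must justify treating $\lambda_R$ as a genuine coordinate rather than substituting $1-\sum_k\lambda_k$ before differentiating. The route above avoids this, because it differentiates the unconstrained extension of $g$ and lets the softmax Jacobian carry the simplex constraint. Substituting $\lambda_R=1-\sum_k\lambda_k$ first would give a different-looking but equivalent formula.

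Second, the signs from the minus in front of the cost terms must be tracked through both $D$ and the covariance.

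Third, one must assume $\hat\sigma_{S^{\bm\lambda}}>0$ so that $g$ is differentiable. I would state this explicitly as a standing assumption, since a degenerate group with constant $S^{\bm\lambda}$ makes the estimator undefined.

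If the standard deviation uses a $1/(G-1)$ normalization, the covariance should use the same normalization. The algebra is otherwise unchanged.
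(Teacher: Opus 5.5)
Your proof is correct and follows the paper's argument. You compute the partial sensitivities by the quotient rule with $\partial\hat\sigma_{S^{\bm{\lambda}}}/\partial\lambda_j = \pm\widehat{\mathrm{Cov}}(S^{\bm{\lambda}},\cdot)/\hat\sigma_{S^{\bm{\lambda}}}$, and pass to the logits via the softmax Jacobian $\lambda_j(\delta_{jk}-\lambda_k)$ over $\{R,1,\dots,K\}$; the paper does the same two steps in the opposite order, and your explicit assumption $\hat\sigma_{S^{\bm{\lambda}}}>0$ is a welcome addition that the paper leaves implicit.
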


\begin{proof}
\label{th:proof_screw_logit_sens}
For a fixed sampled group, define (as before)
\[
g^{\mathrm{ScRew}}(\bm{\lambda})
=
\frac{D}{\hat\sigma_{S^{\bm{\lambda}}}},
\qquad
N_R := \frac{1}{G}\sum_{i=1}^G (R_i-\bar R)\psi_i,
\qquad
N_k := \frac{1}{G}\sum_{i=1}^G (C_{k,i}-\bar C_k)\psi_i.
\]
Since the sampled group is fixed, the quantities \(N_R\) and \(\{N_k\}_{k=1}^K\) do not depend on \(\bm{\lambda}\). The dependence on \(\bm{\lambda}\) enters through the linear numerator \(D\) and through the denominator \(\hat\sigma_{S^{\bm{\lambda}}}\).

We first compute the partial sensitivities with respect to the multiplier coordinates. Recall that
\[
S_i^{\bm{\lambda}} = \lambda_R R_i - \sum_{k=1}^K \lambda_{k} C_{k,i},
\qquad
\bar S^{\bm{\lambda}} = \frac{1}{G}\sum_{i=1}^G S_i^{\bm{\lambda}}.
\]

\paragraph{Derivative with respect to \(\lambda_k\).}
For any constraint \(k\),
\[
\frac{\partial S_i^{\bm{\lambda}}}{\partial \lambda_k} = -C_{k,i},
\qquad
\frac{\partial \bar S^{\bm{\lambda}}}{\partial \lambda_k} = -\bar C_k,
\]
so
\[
\frac{\partial (S_i^{\bm{\lambda}}-\bar S^{\bm{\lambda}})}{\partial \lambda_k}
=
-(C_{k,i}-\bar C_k).
\]
Now write
\[
\hat\sigma_{S^{\bm{\lambda}}}
=
\sqrt{
\frac{1}{G}\sum_{i=1}^G (S_i^{\bm{\lambda}}-\bar S^{\bm{\lambda}})^2} \ .
\]
Differentiating gives,
\begin{align}
\frac{\partial \hat\sigma_{S^{\bm{\lambda}}}}{\partial \lambda_k}
&=
\frac{1}{2\hat\sigma_{S^{\bm{\lambda}}}}
\frac{\partial}{\partial \lambda_k}
\left(
\frac{1}{G}\sum_{i=1}^G (S_i^{\bm{\lambda}}-\bar S^{\bm{\lambda}})^2
\right) \nonumber\\
&=
-\frac{1}{\hat\sigma_{S^{\bm{\lambda}}}}
\frac{1}{G}\sum_{i=1}^G
(S_i^{\bm{\lambda}}-\bar S^{\bm{\lambda}})(C_{k,i}-\bar C_k) \nonumber\\
&=
-\frac{\widehat{\mathrm{Cov}}(S^{\bm{\lambda}},C_k)}{\hat\sigma_{S^{\bm{\lambda}}}}.
\label{eq:proof_sigma_constraint}
\end{align}
Also,
\begin{align}
\frac{\partial g^{\mathrm{ScRew}}}{\partial \lambda_k}
&=
\frac{1}{\hat\sigma_{S^{\bm{\lambda}}}}\frac{\partial D}{\partial \lambda_k}
+
D \frac{\partial}{\partial \lambda_k}
\left(
\frac{1}{\hat\sigma_{S^{\bm{\lambda}}}}
\right) \nonumber\\
&=
-\frac{N_k}{\hat\sigma_{S^{\bm{\lambda}}}}
-
\frac{D}{\hat\sigma_{S^{\bm{\lambda}}}^2}
\frac{\partial \hat\sigma_{S^{\bm{\lambda}}}}{\partial \lambda_k}.
\label{eq:proof_constraint_quotient}
\end{align}
Substituting \eqref{eq:proof_sigma_constraint} into \eqref{eq:proof_constraint_quotient} gives
\begin{equation}
\label{eq:nosoftnorm_proof_screw}
\frac{\partial g^{\mathrm{ScRew}}}{\partial \lambda_k}
=
-\frac{N_k}{\hat\sigma_{S^{\bm{\lambda}}}}
+
\frac{D}{\hat\sigma_{S^{\bm{\lambda}}}^3}\,
\widehat{\mathrm{Cov}}(S^{\bm{\lambda}},C_k).
\end{equation}

Note that \eqref{eq:nosoftnorm_proof_screw} shows the sensitivity of the policy gradient with respect to multipliers $\lambda_i$ if these were not softmax-normalized.

The derivative with respect to \(\lambda_R\) follows a near identical derivation:
\paragraph{Derivative with respect to \(\lambda_R\).}
We have
\[
\frac{\partial (S_i^{\bm{\lambda}}-\bar S^{\bm{\lambda}})}{\partial \lambda_R}
=
R_i-\bar R.
\]
Differentiating gives
\begin{align*}
\frac{\partial \hat\sigma_{S^{\bm{\lambda}}}}{\partial \lambda_R}
&=
\frac{1}{2\hat\sigma_{S^{\bm{\lambda}}}}
\frac{\partial}{\partial \lambda_R}
\left(
\frac{1}{G}\sum_{i=1}^G (S_i^{\bm{\lambda}}-\bar S^{\bm{\lambda}})^2
\right) \nonumber\\
&=
\frac{1}{\hat\sigma_{S^{\bm{\lambda}}}}
\frac{1}{G}\sum_{i=1}^G
(S_i^{\bm{\lambda}}-\bar S^{\bm{\lambda}})(R_i-\bar R) \nonumber\\
&=
\frac{\widehat{\mathrm{Cov}}(S^{\bm{\lambda}},R)}{\hat\sigma_{S^{\bm{\lambda}}}}.
\end{align*}

and so
\begin{align}
\label{eq:nosoftnorm_proof_screw_reward}
\frac{\partial g^{\mathrm{ScRew}}}{\partial \lambda_R}
&=
\frac{1}{\hat\sigma_{S^{\bm{\lambda}}}}\frac{\partial D}{\partial \lambda_R}
+
D \frac{\partial}{\partial \lambda_R}
\left(
\frac{1}{\hat\sigma_{S^{\bm{\lambda}}}}
\right) \nonumber\\
&=
\frac{N_R}{\hat\sigma_{S^{\bm{\lambda}}}}
-
\frac{D}{\hat\sigma_{S^{\bm{\lambda}}}^2}
\frac{\partial \hat\sigma_{S^{\bm{\lambda}}}}{\partial \lambda_R} \nonumber\\
&=
\frac{N_R}{\hat\sigma_{S^{\bm{\lambda}}}}
-
\frac{D}{\hat\sigma_{S^{\bm{\lambda}}}^3}\,
\widehat{\mathrm{Cov}}(S^{\bm{\lambda}},R).
\end{align}

\paragraph{Chain rule through the multiplier logits.}
Since the multipliers are softmax-normalized,
\[
\lambda_j = \frac{\exp(z_j)}{\sum_{j' \in \{R,1,\dots,K\}} \exp(z_{j'})},
\]
their Jacobian is
\[
\frac{\partial \lambda_j}{\partial z_k}
=
\lambda_j(\delta_{jk}-\lambda_k),
\qquad
j \in \{R,1,\dots,K\}.
\]
Applying chain rule,
\begin{align}
\label{eq:grad_softmax_proof}
\frac{\partial g^{\mathrm{ScRew}}}{\partial z_k}
&=
\sum_{j \in \{R,1,\dots,K\}}
\frac{\partial g^{\mathrm{ScRew}}}{\partial \lambda_j}
\frac{\partial \lambda_j}{\partial z_k} \nonumber\\
&=
\sum_{j \in \{R,1,\dots,K\}}
\frac{\partial g^{\mathrm{ScRew}}}{\partial \lambda_j}
\,\lambda_j(\delta_{jk}-\lambda_k) \nonumber\\
&=
\lambda_k \frac{\partial g^{\mathrm{ScRew}}}{\partial \lambda_k}
-
\lambda_k \sum_{j \in \{R,1,\dots,K\}}
\lambda_j \frac{\partial g^{\mathrm{ScRew}}}{\partial \lambda_j}.
\end{align}
Factoring out \(\lambda_k\) yields
\begin{equation}
\label{eq:proof_partial_z_k}
\frac{\partial g^{\mathrm{ScRew}}}{\partial z_k}
= \lambda_k \left(
\frac{\partial g^{\mathrm{ScRew}}}{\partial \lambda_k}
- \sum_{j \in \{R,1,\dots,K\}} \lambda_j \frac{\partial g^{\mathrm{ScRew}}}{\partial \lambda_j}
\right),
\end{equation}
and combining \eqref{eq:nosoftnorm_proof_screw}, \eqref{eq:nosoftnorm_proof_screw_reward} with \eqref{eq:proof_partial_z_k} proves the result.
\end{proof}

\begin{corollary}[Corollary~\ref{cor:scalarized_advantage_logit_sensitivity} restated]
\label{cor:scalarized_advantage_logit_sensitivity_apdx}
For a fixed sampled group, the sensitivity of the scalarized-advantage GRPO estimator to the logit \(z_k\) of constraint \(k\) is
\begin{equation}
\frac{\partial g^{\mathrm{ScAdv}}}{\partial z_k}
=
\lambda_k
\left(
-\frac{N_k}{\hat\sigma_{C_k}}
-
\lambda_R \frac{N_R}{\hat\sigma_R}
+
\sum_{k'=1}^K \lambda_{k'} \frac{N_{k'}}{\hat\sigma_{C_{k'}}}
\right).
\end{equation}
\end{corollary}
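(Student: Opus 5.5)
The plan is to reuse the chain-rule step from the proof of Theorem~\ref{thm:scalarized_reward_logit_sensitivity_apdx}, since the softmax Jacobian argument does not depend on the form of the estimator. For a fixed sampled group, the quantities $N_R$, $N_k$, $\hat\sigma_R$ and $\hat\sigma_{C_k}$ are computed from raw rewards and costs, so none of them depends on $\bm{\lambda}$. Consequently $g^{\mathrm{ScAdv}}(\bm{\lambda})$ is \emph{linear} in the multiplier coordinates, and its partial sensitivities can be read off directly:
\begin{equation*}
\frac{\partial g^{\mathrm{ScAdv}}}{\partial \lambda_R} = \frac{N_R}{\hat\sigma_R},
\qquad
\frac{\partial g^{\mathrm{ScAdv}}}{\partial \lambda_{k'}} = -\frac{N_{k'}}{\hat\sigma_{C_{k'}}}, \quad k'=1,\dots,K.
\end{equation*}
This is the point where the scalarized-advantage case is simpler: there is no shared denominator to differentiate, so the covariance terms appearing in Theorem~\ref{thm:scalarized_reward_logit_sensitivity} never arise.

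Next I apply the softmax Jacobian $\partial\lambda_j/\partial z_k = \lambda_j(\delta_{jk}-\lambda_k)$ for $j\in\{R,1,\dots,K\}$. This is exactly the computation leading to Eq.~\eqref{eq:proof_partial_z_k}, which held for an arbitrary differentiable function of $\bm{\lambda}$, and it gives
\begin{equation*}
\frac{\partial g^{\mathrm{ScAdv}}}{\partial z_k} = \lambda_k\Bigl(\frac{\partial g^{\mathrm{ScAdv}}}{\partial \lambda_k} - \sum_{j} \lambda_j \frac{\partial g^{\mathrm{ScAdv}}}{\partial \lambda_j}\Bigr).
\end{equation*}
Substituting the partials, the weighted sum equals $\lambda_R N_R/\hat\sigma_R - \sum_{k'}\lambda_{k'}N_{k'}/\hat\sigma_{C_{k'}}$, which is just $g^{\mathrm{ScAdv}}$ itself. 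This is expected, because $g^{\mathrm{ScAdv}}$ is homogeneous of degree one in $\bm{\lambda}$. Subtracting it from $-N_k/\hat\sigma_{C_k}$ and distributing the minus sign yields the claimed expression.

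There is no real obstacle here. The only care needed is bookkeeping. First, the sum in the chain rule must run over all $K+1$ softmax coordinates, including the dummy reward logit $z_R$. Second, the signs of the constraint terms must be tracked through the final subtraction. One should also note implicitly that $\hat\sigma_R,\hat\sigma_{C_k}>0$ for the group, so that the estimator is well defined.
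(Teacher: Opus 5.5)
Your proposal is correct and is essentially the paper's own proof: you read the partials off the linear form of $g^{\mathrm{ScAdv}}$ (as in Eq.~\eqref{eq:nosoftnorm_proof}) and substitute them into the general softmax chain-rule identity from the theorem's proof. Your remark that $\sum_j \lambda_j\,\partial g^{\mathrm{ScAdv}}/\partial\lambda_j = g^{\mathrm{ScAdv}}$ by degree-one homogeneity is a small observation the paper does not make, but it does not change the argument.
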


\begin{proof}
\label{cor:proof_scadv_logit_sense}
For a fixed sampled group, the scalarized-advantage GRPO estimator can be written as
\[
g^{\mathrm{ScAdv}}(\bm{\lambda})
=
\lambda_R \frac{N_R}{\hat\sigma_R}
-
\sum_{k=1}^K \lambda_{k} \frac{N_{k}}{\hat\sigma_{C_{k}}}.
\]
Since the sampled group is fixed, the quantities \(N_R\), \(N_{k}\), \(\hat\sigma_R\), and \(\hat\sigma_{C_{k}}\) do not depend on \(\bm{\lambda}\). Therefore, the partial sensitivities with respect to the multiplier coordinates are immediate:
\begin{equation}
\label{eq:nosoftnorm_proof}
\frac{\partial g^{\mathrm{ScAdv}}}{\partial \lambda_R}
=
\frac{N_R}{\hat\sigma_R},
\qquad
\frac{\partial g^{\mathrm{ScAdv}}}{\partial \lambda_k}
=
-\frac{N_k}{\hat\sigma_{C_k}}.
\end{equation}

Note that \eqref{eq:nosoftnorm_proof} shows the sensitivity of the policy gradient with respect to multipliers $\lambda_i$ if these were not softmax-normalized. However, since
\[
\lambda_j = \frac{\exp(z_j)}{\sum_{j' \in \{R,1,\dots,K\}} \exp(z_{j'})},
\]
Then combining \eqref{eq:grad_softmax_proof} with \eqref{eq:nosoftnorm_proof}, we get
\[
\frac{\partial g^{\mathrm{ScAdv}}}{\partial z_k}
=
\lambda_k
\left(
-\frac{N_k}{\hat\sigma_{C_k}}
-
\lambda_R \frac{N_R}{\hat\sigma_R}
+
\sum_{k'=1}^K \lambda_{k'} \frac{N_{k'}}{\hat\sigma_{C_{k'}}}
\right),
\]
which proves the result.
\end{proof}

\newpage
\begin{theorem}[Effective Multiplier for Scalarized Rewards]
\label{th:eff_mul_screw}

Let
\begin{equation*}
\label{eq:apdx:sr_components_thm}
\mathbf{x} \;:=\; 
\begin{bmatrix}
R \\ C_1 \\ \vdots \\ C_K
\end{bmatrix}
\in \mathbb{R}^{K+1},
\qquad
\boldsymbol{\lambda} \;:=\;
\begin{bmatrix}
\lambda_R \\ -\lambda_1 \\ \vdots \\ -\lambda_K
\end{bmatrix}
\in \mathbb{R}^{K+1},
\end{equation*}
and define the scalarized reward $S^{\bm{\lambda}} := \boldsymbol{\lambda}^\top \mathbf{x}$. 
Denote the within-group mean and covariance of $\mathbf{x}$ by $\boldsymbol{\mu}:=\mathbb{E}[\mathbf{x}]$ and $\mathbf{\Sigma}:=\mathrm{Cov}(\mathbf{x})$, and let
$\sigma_{S^{\bm{\lambda}}} \;:=\; \sqrt{\mathrm{Var}(S^{\bm{\lambda}})} \;=\; \sqrt{\boldsymbol{\lambda}^\top \mathbf{\Sigma}\,\boldsymbol{\lambda}}$.
Further define per-component standardizations
\begin{equation*}
\label{eq:apdx:sr_zscores_thm}
\begin{aligned}
\mathbf{z} &:= \mathbf{D}^{-1}(\mathbf{x}-\boldsymbol{\mu}),
\qquad
\mathbf{D} := \mathrm{diag}(\sigma_R,\sigma_{C_1},\dots,\sigma_{C_K}) 
\end{aligned}
\end{equation*}
so that 
\begin{equation}
\label{eq:apdx:per_comp_stand}
\begin{aligned}
Z_R &:= \frac{R-\mu_R}{\sigma_R},\\ \quad
Z_{C_k} &:= \frac{C_k-\mu_{C_k}}{\sigma_{C_k}}, \quad k = 1,\dots,K,
\end{aligned}
\end{equation}
where $\mu_{(\cdot)}$ and $\sigma_{(\cdot)}$ denote the within-group mean and standard deviation of the corresponding component.
Then the GRPO advantage obtained by standardizing the scalarized reward,
\begin{equation*}
\label{eq:apdx:sr_advantage_thm}
A^{\mathrm{ScRew}} \;:=\; \frac{S^{\bm{\lambda}}-\mathbb{E}[S^{\bm{\lambda}}]}{\sigma_{S^{\bm{\lambda}}}},
\end{equation*}
admits the expansion
\begin{equation}
\label{eq:apdx:sr_advantage_expanded_thm}
A^{\mathrm{ScRew}}
\;=\;
\frac{(\mathbf{D}\boldsymbol{\lambda})^\top \mathbf{z}}{\sigma_{S^{\bm{\lambda}}}}
\;=\; 
\sum_{j=0}^{K}\Big(\frac{\lambda_j\,\sigma_j}{\sigma_{S^{\bm{\lambda}}}}\Big)\,Z_j,
\end{equation}
where we index $j=0$ as the main reward component ($\lambda_0\!\equiv\!\lambda_R$, $\sigma_0\!\equiv\!\sigma_R$) and $j=k$ corresponds to cost $C_k$.
Consequently, the effective contribution of each standardized component $Z_j$ is scaled by $\lambda_j\sigma_j/\sigma_{S^{\bm{\lambda}}}$, and therefore depends on within-group scales and (through $\sigma_{S^{\bm{\lambda}}}$) the full covariance structure $\mathbf{\Sigma}$.
\end{theorem}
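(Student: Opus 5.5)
The identity is linear algebra on the centered scalarized reward, so the plan is a short chain of substitutions. First, use linearity of expectation: since $S^{\bm{\lambda}}=\boldsymbol{\lambda}^\top\mathbf{x}$, we have $\mathbb{E}[S^{\bm{\lambda}}]=\boldsymbol{\lambda}^\top\boldsymbol{\mu}$, so the numerator of $A^{\mathrm{ScRew}}$ is $\boldsymbol{\lambda}^\top(\mathbf{x}-\boldsymbol{\mu})$. Here $\mathbb{E}$ and $\mathrm{Cov}$ are the within-group empirical mean and covariance, and linearity holds for these as well. In the same way, $\mathrm{Var}(S^{\bm{\lambda}})=\boldsymbol{\lambda}^\top\mathbf{\Sigma}\boldsymbol{\lambda}$, which confirms that $\sigma_{S^{\bm{\lambda}}}$ is the standard deviation used by GRPO. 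I will stress that this denominator depends on all of $\mathbf{\Sigma}$, including the off-diagonal entries.

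Next, I will insert the identity $\mathbf{D}\mathbf{D}^{-1}$ into the numerator:
\[
\boldsymbol{\lambda}^\top(\mathbf{x}-\boldsymbol{\mu})=\boldsymbol{\lambda}^\top\mathbf{D}\,\mathbf{D}^{-1}(\mathbf{x}-\boldsymbol{\mu})=(\mathbf{D}\boldsymbol{\lambda})^\top\mathbf{z},
\]
where the last step uses $\mathbf{D}^\top=\mathbf{D}$ because $\mathbf{D}$ is diagonal. Dividing by $\sigma_{S^{\bm{\lambda}}}$ gives the first equality in \eqref{eq:apdx:sr_advantage_expanded_thm}. Writing the inner product coordinatewise gives $\sum_j \lambda_j\sigma_j Z_j/\sigma_{S^{\bm{\lambda}}}$, with $Z_j$ as in \eqref{eq:apdx:per_comp_stand}. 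For the sign, the theorem's vector $\boldsymbol{\lambda}$ already carries $-\lambda_k$ in the cost coordinates, so the $j=k$ coefficient is $-\lambda_k\sigma_{C_k}/\sigma_{S^{\bm{\lambda}}}$. This matches the main-text form $\tilde\lambda_R Z_R-\sum_k\tilde\lambda_k Z_{C_k}$ with $\tilde\lambda_k=\lambda_k\sigma_{C_k}/\sigma_{S^{\bm{\lambda}}}$. I will state this bookkeeping explicitly so the index-$j$ sum and the signed form agree.

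The only real obstacle is degeneracy. $\mathbf{D}^{-1}$ requires every $\sigma_j>0$, but with indicator costs a group can easily have $\sigma_{C_k}=0$. I would handle this by noting that when $\sigma_j=0$ we have $x_j-\mu_j\equiv0$, so that component contributes nothing to the numerator. The expansion then holds with the convention $\sigma_jZ_j:=0$, or equivalently by restricting $\mathbf{D}$ to the nondegenerate coordinates. I will likewise assume $\sigma_{S^{\bm{\lambda}}}>0$, which GRPO already needs for $A^{\mathrm{ScRew}}$ to be defined. The closing "consequently" remark follows directly: $\sigma_j$ depends on within-group scales, and $\sigma_{S^{\bm{\lambda}}}=\sqrt{\boldsymbol{\lambda}^\top\mathbf{\Sigma}\boldsymbol{\lambda}}$ depends on the full covariance matrix.
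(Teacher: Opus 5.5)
Your proposal is correct and follows essentially the same route as the paper's proof: center by linearity to get $\boldsymbol{\lambda}^\top(\mathbf{x}-\boldsymbol{\mu})$, compute $\sigma_{S^{\bm{\lambda}}}^2=\boldsymbol{\lambda}^\top\mathbf{\Sigma}\boldsymbol{\lambda}$, substitute $\mathbf{x}-\boldsymbol{\mu}=\mathbf{D}\mathbf{z}$ and use $\mathbf{D}^\top=\mathbf{D}$, then divide and expand coordinatewise. Your explicit sign bookkeeping for the cost coordinates is a sensible addition that the paper leaves implicit, as is your treatment of degenerate groups ($\sigma_j=0$, with $\sigma_{S^{\bm{\lambda}}}>0$ assumed).
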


\begin{proof}
By definition, $S^{\bm{\lambda}}=\boldsymbol{\lambda}^\top \mathbf{x}$ and $\mathbb{E}[S^{\bm{\lambda}}]=\boldsymbol{\lambda}^\top \boldsymbol{\mu}$, hence
\begin{equation}
\label{eq:apdx:sr_centering_proof}
S^{\bm{\lambda}}-\mathbb{E}[S^{\bm{\lambda}}] \;=\; \boldsymbol{\lambda}^\top(\mathbf{x}-\boldsymbol{\mu}).
\end{equation}
Moreover,
\begin{equation*}
\label{eq:apdx:sr_var_proof}
\mathrm{Var}(S^{\bm{\lambda}})
= \mathrm{Var}(\boldsymbol{\lambda}^\top \mathbf{x})
= \boldsymbol{\lambda}^\top \mathrm{Cov}(\mathbf{x})\,\boldsymbol{\lambda}
= \boldsymbol{\lambda}^\top \mathbf{\Sigma}\,\boldsymbol{\lambda},
\end{equation*}
which yields $\sigma_{S^{\bm{\lambda}}}=\sqrt{\boldsymbol{\lambda}^\top \mathbf{\Sigma}\,\boldsymbol{\lambda}}$.

Substituting into \eqref{eq:apdx:sr_centering_proof} gives
\begin{equation*}
\label{eq:apdx:sr_substitution_proof}
S^{\bm{\lambda}}-\mathbb{E}[S^{\bm{\lambda}}] \;=\; \boldsymbol{\lambda}^\top \mathbf{D}\mathbf{z} \;=\; (\mathbf{D}\boldsymbol{\lambda})^\top \mathbf{z}.
\end{equation*}
Dividing both sides by $\sigma_{S^{\bm{\lambda}}}$ yields
\begin{equation*}
A^{\mathrm{ScRew}}
= \frac{S^{\bm{\lambda}}-\mathbb{E}[S^{\bm{\lambda}}]}{\sigma_{S^{\bm{\lambda}}}}
= \frac{(\mathbf{D}\boldsymbol{\lambda})^\top \mathbf{z}}{\sigma_{S^{\bm{\lambda}}}},
\end{equation*}
which is the first equality in \eqref{eq:apdx:sr_advantage_expanded_thm}. Expanding the dot product gives the summation form
\begin{equation*}
\label{eq:apdx:sr_advantage_expanded}
A^{\mathrm{ScRew}}
= \sum_{j=0}^{K}\Big(\frac{\lambda_j\,\sigma_j}{\sigma_{S^{\bm{\lambda}}}}\Big)\,Z_j,
\end{equation*}
completing the proof.
\end{proof}

\newpage
\section{Preliminaries}
\label{sec:mdp_prelims}

\textbf{Markov Decision Processes (MDPs).}
A Markov decision process (MDP) is defined by $(\mathcal{S},\mathcal{A},P,R)$~\cite{sutton2018reinforcement}, where $\mathcal{S}$ is the state space, $\mathcal{A}$ the action space, $P(\cdot\mid s,a)$ the transition kernel, and $R(s,a)$ the reward function. 
At time $t$, the agent samples an action $a_t\sim\pi(\cdot\mid s_t)$, transitions according to $s_{t+1}\sim P(\cdot\mid s_t,a_t)$, and receives reward $R(s_t,a_t)$. 
The trajectory distribution under $\pi$ and the expected discounted return are
\begin{align*}
p_\pi(\tau) &= P_0(s_0)\prod_{t=0}^{T-1} P(s_{t+1}\mid s_t,a_t)\,\pi(a_t\mid s_t), \\
J_R(\pi) &= \mathbb{E}_{\tau\sim p_\pi}\Big[\sum_{t=0}^{T-1} \gamma^t R(s_t,a_t)\Big],
\end{align*}
where the objective is to learn a policy $\pi^*$ that maximizes the expected discounted return, i.e. $\pi^*=\arg\max_{\pi} J_R(\pi)$.

\textbf{Constrained Markov Decision Processes (CMDPs).}
A constrained MDP (CMDP) augments the MDP objective with $K$ expected cost constraints~\cite{altman1999constrained}. 
Given cost functions $C_k:\mathcal{S}\times\mathcal{A}\to\mathbb{R}$ and thresholds $d_k$, define the feasible set of policies as
\begin{equation*}
\begin{aligned}
\Pi_C &= \Big\{\pi\in\Pi:\ J_{C_k}(\pi)\le d_k,\;\;k=1,\dots,K\Big\}, \\
J_{C_k}(\pi) &= \mathbb{E}_{\tau\sim p_\pi}\Big[\sum_{t=0}^{T-1} \gamma^t C_k(s_t,a_t)\Big].
\end{aligned}
\end{equation*}

The constrained optimal policy maximizes return subject to feasibility:
\begin{align*}
\pi^*=\arg\max_{\pi\in\Pi}\ J_R(\pi)\quad \text{s.t.}\quad J_{C_k}(\pi)\le d_k,\;\;k=1,\dots,K,
\end{align*}
where stationary \emph{stochastic} policies may be required for optimality~\cite{altman1999constrained}.

\section{Gridworld Environment}
\label{sec:apdx:lava_env}

\subsection{Environment Details}

Our gridworld is a $10\times10$ grid in which the agent is initialized uniformly at random and must navigate to a randomly sampled goal location. 
Reaching the goal yields a sparse terminal reward of $1.0$. 
At the start of each episode, $20\%$ of the tiles are designated as lava; stepping onto a lava tile incurs a penalty of $1.0$ (i.e., a cost). 
The agent is also equipped with a battery that drains by $5\%$ at every timestep and incurs a penalty whenever its charge falls below $10\%$. 
The action space consists of five discrete actions: moving in the four cardinal directions, or staying in place to recharge at a rate of $20\%$ per timestep.
The agent observes a compact state representation consisting of:
\begin{itemize}
    \item its $(x,y)$ position, normalized to $[0,1]^2$;
    \item the goal $(x,y)$ coordinates, normalized to $[0,1]^2$;
    \item the current battery level in $[0,1]$;
    \item a $5\times5$ local occupancy map centered on the agent, where entries indicate lava tiles (flattened to a 25-dimensional vector).
\end{itemize}
Episodes last at most $80$ timesteps and terminate early only upon reaching the goal.

\subsection{Implementation Details}

We train a categorical policy parameterized by a two-layer MLP (hidden size $128$ with ReLU activations) that outputs action logits. 
Policies are optimized with the GRPO clipped objective ($\epsilon=0.2$) using $2$ epochs per update, minibatches of size $2048$, and an entropy bonus of $0.001$. 
Training runs for $8000$ update steps; each update collects $64$ episodes via grouped rollouts (group size $8$ across $8$ groups). 
We use Adam for the policy optimizer (learning rate $5\times 10^{-4}$ by default) and, when constraints are enabled, a separate Adam optimizer for cost weights (learning rate $10^{-2}$). 
Cost weights are parameterized via a softmax over \{reward, lava, battery\} parameters and updated every step and initialized to $0.02$ logits. 
All experiments were run on 16 CPU cores, and each run type required approximately 3 hours to complete across all five seeds.

\subsection{Additional Results}
We compare \emph{scalarized rewards} and \emph{scalarized advantages} under \emph{fixed} reward and penalty weights to illustrate the benefit of scalarizing advantages in the unconstrained setting \cite{ichihara2025mo}. 
For simplicity, we restrict the battery-penalty weight to $\lambda_{\text{battery}}^{\text{fixed}} \in \{0.0, 0.1\}$ and sweep the lava-penalty weight over $\lambda_{\text{lava}}^{\text{fixed}} \in \{0.0, 0.01, 0.05, 0.1, 0.2, 0.3, 0.5, 0.7, 1.0\}$. 
Figure~\ref{fig:gridworld_fixed_weights} illustrates how performance varies as we sweep $\lambda_{\text{lava}}^{\text{fixed}}$ for the two fixed settings of $\lambda_{\text{battery}}^{\text{fixed}}$.
Overall, we observe consistently higher goal-reaching rates with \emph{scalarized advantages}, along with more stable performance across seeds (through smaller error bars). 
Moreover, \emph{scalarized rewards} tends to unilaterally optimize the lava rate: even the smallest non-zero lava weight, $\lambda_{\text{lava}}^{\text{fixed}}=0.01$, leads to near-minimal lava visitation. 
In contrast, \emph{scalarized advantages} provides smoother, more gradual control over the lava rate, which we argue better reflects the intended meaning of the weights.
When multiple penalty terms are active ($\lambda_{\text{battery}}^{\text{fixed}}=0.1$, i.e., bottom row in Fig. \ref{fig:gridworld_fixed_weights}), \emph{scalarized advantages} relaxes the lava rate for $\lambda_{\text{lava}}^{\text{fixed}}\leq 0.5$, whereas \emph{scalarized rewards} behaves identically to the $\lambda_{\text{battery}}^{\text{fixed}}=0.0$ setting, effectively optimizing the lava penalty in the same way regardless of the battery penalty.

\begin{figure}[t]
    \centering
    \includegraphics[width=0.7\columnwidth]{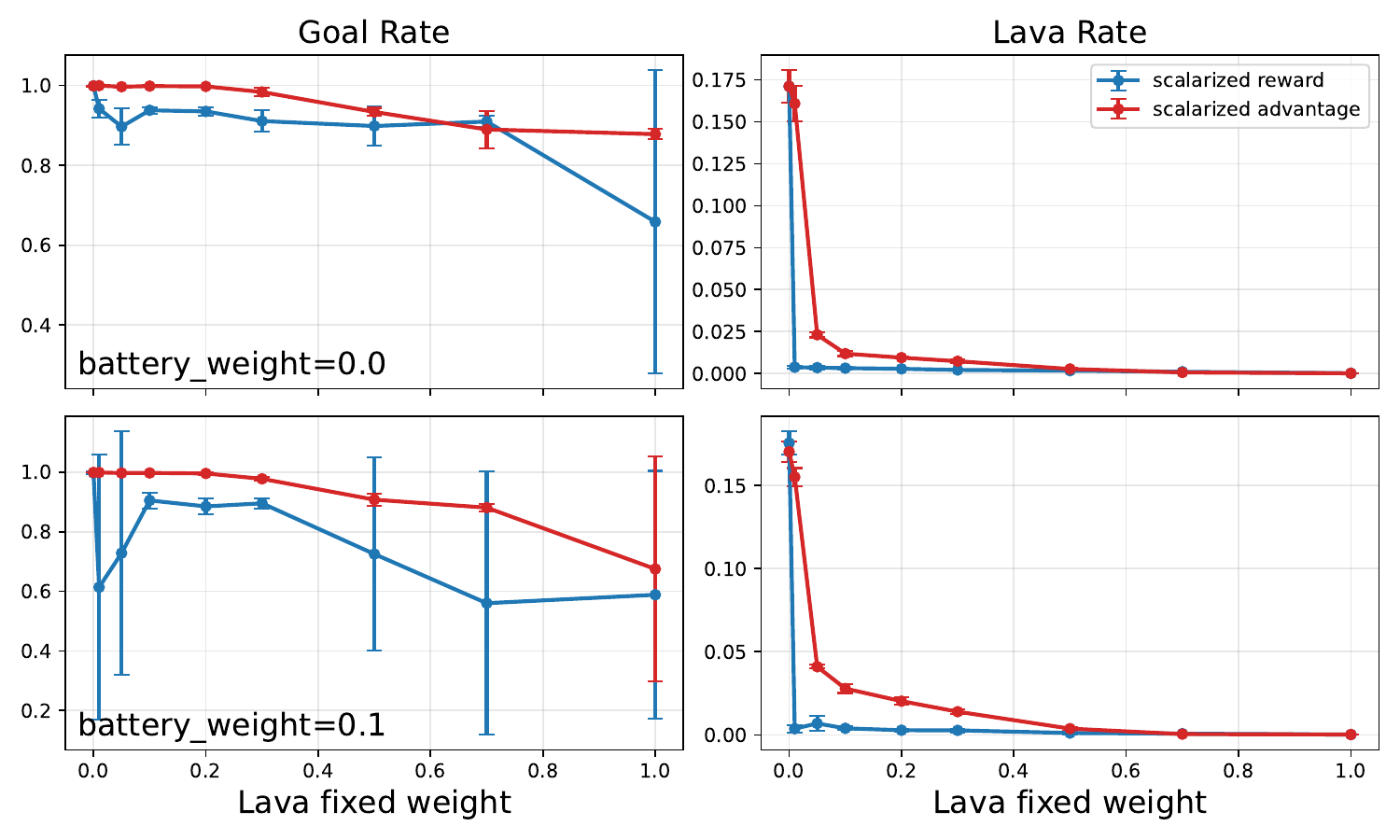} %
    \caption{Final performance of GRPO policies evaluated over 1{,}000 episodes in the gridworld with fixed penalty weights, comparing \emph{scalarized rewards} (blue) and \emph{scalarized advantages} (red). Scalarized advantages consistently achieves higher goal-reaching rates while tracking the intended trade-off induced by the lava-weight sweep more faithfully, whereas scalarized rewards suppresses lava visitation disproportionately.}

    \label{fig:gridworld_fixed_weights}
    \vspace{-2mm}
\end{figure}

Additionally, we show results for a single constraint in Figure~\ref{fig:gridworld_cgrpo_lava001_battery1}, and for two constraints (battery and lava) in Figure~\ref{fig:gridworld_cgrpo_lava001_battery01}. 
We refer the reader to the caption for more details.

\begin{figure}[t]
    \centering
    \includegraphics[width=\columnwidth]{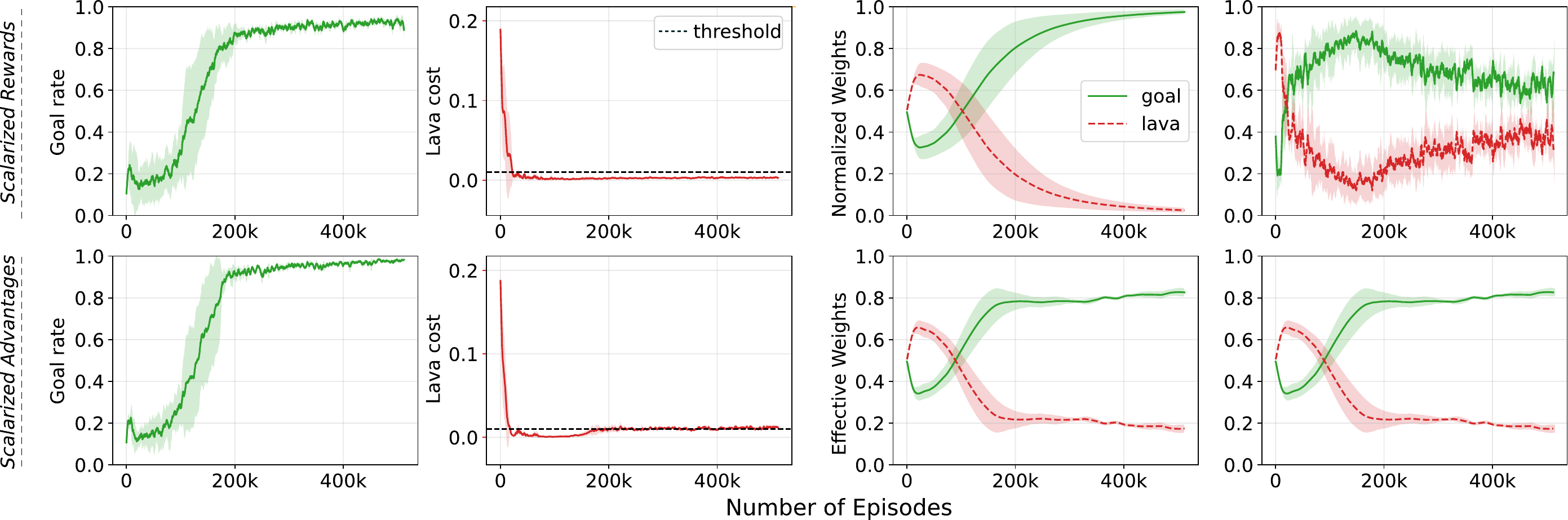}
    \caption{Learning dynamics on the gridworld under Constrained GRPO, comparing \emph{scalarized advantages} (left) and \emph{scalarized rewards} (right). In this experiment, we set $\tilde d_{\text{lava}}=0.01$ and do not enforce the battery constraint. Scalarized advantages again produces a stronger final policy and makes effective use of the allowable cost budget, with the lava rate touching the specified threshold in order to allow for success on the main task. In contrast, scalarized rewards drives the constraint rate to near-zero early in training; despite small multipliers values, the corresponding effective weights remain larger and noisier, leading to less stable learning and lower final performance.}
    \label{fig:gridworld_cgrpo_lava001_battery1}
\end{figure}

\begin{figure}[t]
    \centering
    \includegraphics[width=\columnwidth]{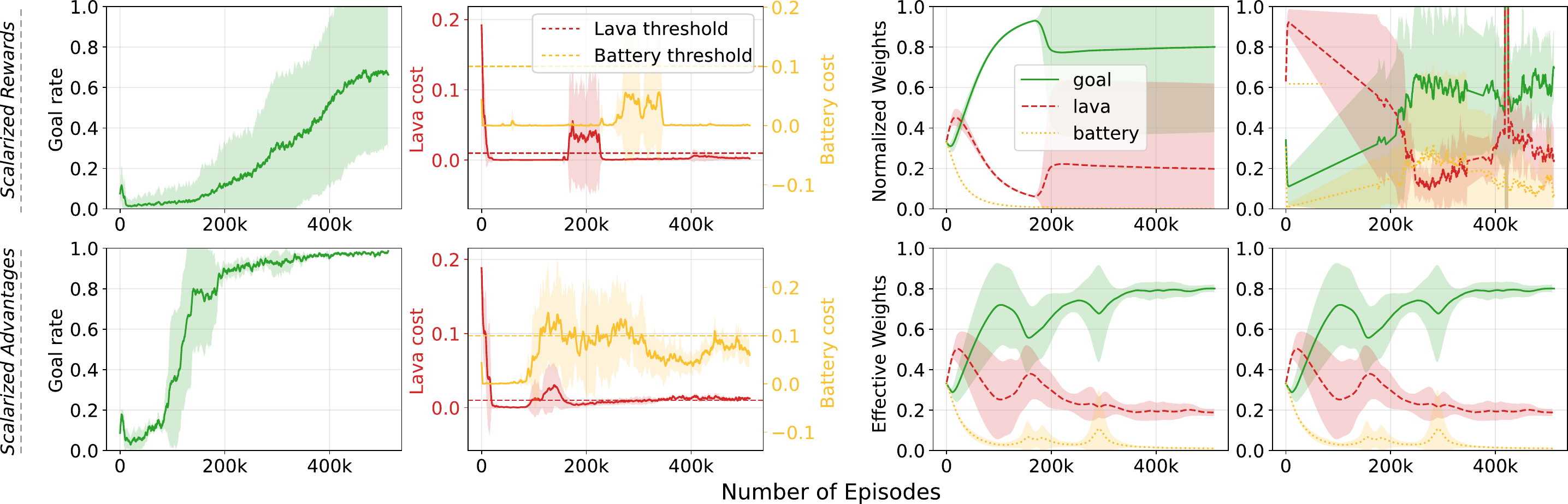}
    \caption{Learning dynamics on the gridworld under Constrained GRPO, comparing \emph{scalarized advantages} (left) and \emph{scalarized rewards} (right). In this experiment, we set $\tilde d_{\text{lava}}=0.01$ and $\tilde d_{\text{battery}}=0.1$. Scalarized advantages again produces a stronger final policy and makes effective use of its allowable costs' budget, with both the lava rate and battery rate stabilizing near their specified thresholds in order to allow for success on the main task. 
    In contrast, scalarizing rewards drives both constraints' cost rate to near-zero early in training. Even with small values of multipliers, the corresponding effective weights remain larger and noisier, leading to less stable learning and lower final performance.
    Interestingly, we see that when it tries to creep higher, the effective multipliers react excessively, resulting in the agent not adequately navigating the environment, as shown by the poor goal success rate.}
    \label{fig:gridworld_cgrpo_lava001_battery01}
\end{figure}

\section{NAVSIM v2}
\label{sec:apdx:navsim}

\subsection{Benchmark Details}

In Stage~1, the benchmark evaluates closed-loop planning quality in an open-loop manner. Concretely, each planner outputs a $4$-second trajectory sampled at $10$\,Hz, which is then executed forward without feedback using a kinematic bicycle model with an LQR controller.
Unlike NAVSIM-v1~\cite{Dauner2024NEURIPS}, which simply replayed recorded agent trajectories, NAVSIM-v2 introduces reactive behavior from other vehicles through the Intelligent Driver Model (IDM)~\cite{treiber2000idm}, improving simulation realism. Stage~2 evaluates robustness under compounding error by synthesizing new driving rollouts using Multi-Traversal Gaussian Splatting~\cite{li2025mtgs}. These scenarios are initialized from perturbed ego states (e.g., suboptimal position and/or heading), requiring the planner to produce corrective behavior.
The resulting Navhard evaluation split contains $450$ observations for Stage~1 and $5{,}462$ observations for Stage~2.

The following is the expression for EPDMS:
\begin{align}
\label{eq:apdx:epdms}
\mathrm{EPDMS} =
\underbrace{\prod_{m \in \mathcal{M}_{\mathrm{pen}}}
\mathrm{filter}_{m}(\text{agent},\,\text{human})}_{\text{penalty terms}}
\;\cdot\;
\underbrace{\left(
\frac{\sum_{m \in \mathcal{M}_{\mathrm{avg}}} w_m \, \mathrm{filter}_{m}(\text{agent},\,\text{human})}
     {\sum_{m \in \mathcal{M}_{\mathrm{avg}}} w_m}
\right)}_{\text{weighted average terms}}
\,.
\end{align}
To improve fairness, the function $\mathrm{filter}_{m}$ ignores penalty violations (i.e., returns $1$ instead of $0$) whenever the human expert driver makes the same violation. The weights $w_m$ for the weighted average terms are: (1) $w_m=5$ for EP, (2) $w_m=5$ for TTC, (3) $w_m=2$ for HC, (4) $w_m=2$ for EC, and (5) $w_m=2$ for LK.

\subsection{Base Model Pretraining}

We initialize all experiments from the Qwen2.5-VL 3B Instruct vision-language model~\cite{Bai2025Qwen25VLTR}. We first perform supervised finetuning on the CoVLA dataset~\cite{Arai2024CoVLACV}, which contains $83$ hours of nominal driving collected in Japan. Following Rowe et al.~\cite{rowe2025poutine}, we represent planned trajectories as serialized text and train the model using next-token prediction in the original vocabulary.

To promote richer planning behavior, we additionally include automatically generated chain-of-thought annotations, produced in a zero-shot manner by Qwen2.5-VL 72B Instruct, and prepend these to the target trajectory sequences during training. After this initialization stage, we further supervise finetune the model on $92{,}000$ scenes from the NAVSIM training set~\cite{Cao2025CORL, Dauner2024NEURIPS}. The remaining $\approx 11{,}000$ scenes are held out and used exclusively for reinforcement learning finetuning (RLFT) in our benchmark experiments.

\subsection{Hyperparameters}

We finetune all our models for two epochs through the $11$k held-out scenes for RL finetuning (RLFT).
All our experiments are performed on 8 A100 GPUs, with each RLFT run taking nearly 20 hours.
We set $\beta = 0.02$ to constrain the policy to the reference SFT policy, the learning rate to $1e^{-6}$, and the group size to $16$. We perform two iterations of policy optimization per batch of data.
We update the Lagrangian multipliers at the same rate as the policy.

\subsection{PPO and PPO Lagrangian Details}

The value network is a separate Qwen2.5-VL 3B model initialized from the same SFT checkpoint as the policy. This backbone is finetuned using LoRA \cite{hu2022lora} and we attach separate linear value heads for each reward and constraint signal to the last hidden state of the Qwen model. 
Advantages are computed per-token using GAE ($\lambda=0.95$, $\gamma=0.99$), with the value network trained for 200 warmup steps before policy optimization begins.

\subsection{Additional Results}

In Figure~\ref{fig:train_constrained_multipliers_grpo}, we show representative learning dynamics of the multipliers and corresponding constraints, highlighting differences between the two advantage estimation approaches for constrained GRPO in NAVSIM.

\begin{figure*}[h!t]
    \centering
    \includegraphics[width=\textwidth]{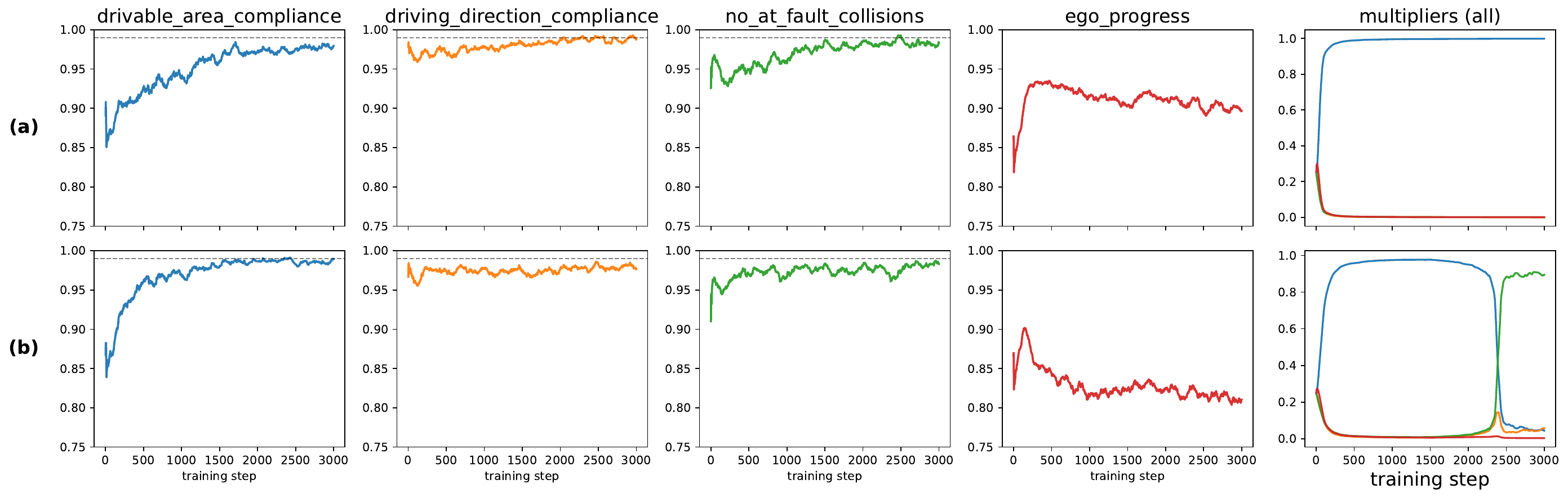}
    \caption{Constrained GRPO plots of constraint and multiplier values through training, comparing with \textbf{(a)} scalarized rewards with \textbf{(b)} scalarized advantages. Focusing on the ego progress and the drivable area compliance, we see that with the scalarized rewards, the policy does not adhere to the multiplier settings as we intend. First, the drivable area compliance (DAC) stays below its minimum threshold $0.99$ throughout its training even though the multiplier value had moved close to its maximum value of $1.0$. Moreover, we see that even though the ego-progress (a component of the WAT reward) is set to a very low value, the scalarized rewards variant still maximizes it. Conversely, the scalarized advantage model allows the different components to adhere to the multiplier setting. In addition, we see that at around $2500$ training steps, something interesting happens in the scalarized advantage model's multipliers; since the DAC constraint is satisfied and the no at-fault collision (NC) takes a sharp dip, the Lagrangian optimization pushes the NC multiplier higher and squashes the DAC multiplier. This confirms our theoretical findings that scalarizing the advantages leads to better adherence to the specified multiplier weights.}
    \label{fig:train_constrained_multipliers_grpo}
\end{figure*}

\section{GSM8K}

\subsection{Hyperparameters}
We employ huggingface pretrained Qwen2.5-Instruct models with varying sizes.
All models are RL-finetuned on a single node of 8x H200 GPUs, taking under three hours to complete four epochs of training, which was sufficient. 
We use the AdamW optimizer with a learning rate of $1e-6$ for 3B and 7B models, and a learning rate of $5e-6$ for 1.5B models; the Lagrangian parameters are optimized with the Adam optimizer with a learning rate of $1e-4$.

The constraint violation thresholds are different for different model sizes, since each has its own capability and setting thresholds too high can result in infeasible policies.
For all model size, the format adherence and integer validity thresholds were set at $0.99$; so the policy needed to respect the format and integer constraint $99\%$ of the time.
For the correctness constraint, the 1.5B sized model all employed a $75\%$; the 3B employed a $90\%$ threshold; and the 7B employed a $95\%$. 
On the training set, these thresholds were all respected; however, as can be seen in Table~\ref{tab:gsm8k_main}, there is a small shift in the performance across the board (ppo-lagrangian and constrained GRPO).

\section{Limitations}
\label{sec:apdx:limitations}

In this section, we describe some of the limitations of constrained GRPO.
In constrained policy optimization, a general problem is how one should choose the thresholds of the different behavioural constraints. 
While in this work we mitigate this problem by using interpretable indicator cost functions, it is unclear how to effectively choose these thresholds such that the constraint remain feasible.
In addition, with multiple constraints interacting and competing together, progress on certain constraints could stall completely when one of the constraints is infeasible, effectively killing the optimization process.
An interesting direction for future work is to couple Constrained GRPO with curriculum-style mechanisms that adapt constraint thresholds over training, potentially improving exploration and enabling satisfaction of stricter constraints.

\end{document}